%%%%%%%%%%%%%%%%%%%%%%%%%%%%%%%%%%%%%%%%%%%%%%%%%%%%%%%%%%%%%%%%%%%%%%%%%%%%%%%%
%2345678901234567890123456789012345678901234567890123456789012345678901234567890
%        1         2         3         4         5         6         7         8

\documentclass[conference]{IEEEtran}  % Comment this line out if you need a4paper

%\documentclass[a4paper, 10pt, conference]{ieeeconf}      % Use this line for a4 paper

%In case you encounter the following error:
%Error 1010 The PDF file may be corrupt (unable to open PDF file) OR
%Error 1000 An error occurred while parsing a contents stream. Unable to analyze the PDF file.
%This is a known problem with pdfLaTeX conversion filter. The file cannot be opened with acrobat reader
%Please use one of the alternatives below to circumvent this error by uncommenting one or the other
%\pdfobjcompresslevel=0
%\pdfminorversion=4

% See the \addtolength command later in the file to balance the column lengths
% on the last page of the document

% The following packages can be found on http:\\www.ctan.org
\usepackage{amsmath}
\usepackage{graphicx}
\usepackage{verbatim}
\usepackage{latexsym}
\usepackage{setspace}
\usepackage{amssymb}
\usepackage{mathtools}
\usepackage{bbm}
\usepackage{caption}
\usepackage{subcaption}
\usepackage{bm}
\usepackage{cite}
\usepackage{tikz}
\usetikzlibrary{arrows,shapes}
\usepackage{pgfplots}
\usepackage{amsthm}
\usepackage{algorithm}
\usepackage{algpseudocode}

\usepackage[colorinlistoftodos]{todonotes}

\newtheorem{definition}{Definition}

\newtheorem{proposition}{Proposition}
\newtheorem{problem}{Problem}
\newtheorem{lemma}{Lemma}

\newtheorem{remark}{Remark}
\newtheorem{assumption}{Assumption}
\newtheorem*{assumption*}{Assumption}

\title{\LARGE \bf
A Self-Guided Approach for Navigation in a Minimalistic Foraging Robotic Swarm
}

\author{\IEEEauthorblockN{Steven Adams}\IEEEauthorblockA{Delft Center for Systems and Control,\\
Delft University of Technology,\\
Delft, 2628 CD, The Netherlands.\\
\textit{s.j.l.adams@tudelft.nl}
        }\and
\IEEEauthorblockN{ Daniel Jarne Ornia} \IEEEauthorblockA{Delft Center for Systems and Control,\\
Delft University of Technology,\\
Delft, 2628 CD, The Netherlands.\\
\textit{d.jarneornia@tudelft.nl}
        }
        \and
\IEEEauthorblockN{Manuel Mazo, Jr.}% <-this % stops a space
% <-this % stops a space

\IEEEauthorblockA{Delft Center for Systems and Control,\\
Delft University of Technology,\\
Delft, 2628 CD, The Netherlands.\\
\textit{m.mazo@tudelft.nl}
        }%
}

\begin{document}

\maketitle

%%%%%%%%%%%%%%%%%%%%%%%%%%%%%%%%%%%%%%%%%%%%%%%%%%%%%%%%%%%%%%%%%%%%%%%%%%%%%%%%
\begin{abstract}
We present a biologically inspired design for swarm foraging based on ant's pheromone deployment, where the swarm is assumed to have very restricted capabilities. The robots do not require global or relative position measurements and the swarm is fully decentralized and needs no infrastructure in place. Additionally, the system only requires one-hop communication over the robot network, we do not make any assumptions about the connectivity of the communication graph and the transmission of information and computation is scalable versus the number of agents. This is done by letting the agents in the swarm act as foragers or as guiding agents (beacons). We present experimental results computed for a swarm of Elisa-3 robots on a simulator, and show how the swarm self-organizes to solve a foraging problem over an unknown environment, converging to trajectories around the shortest path. At last, we discuss the limitations of such a system and propose how the foraging efficiency can be increased.
\end{abstract}

%%%%%%%%%%%%%%%%%%%%%%%%%%%%%%%%%%%%%%%%%%%%%%%%%%%%%%%%%%%%%%%%%%%%%%%%%%%%%%%%
\section{INTRODUCTION}
In the past thirty years the use of multi-agent techniques to solve robotic tasks has exploded. The advancement of processing power, sensor accuracy and battery sizes has enabled the realisation of coordinated multi-robot problems, where a number of agents organize to solve a specific task. When designing increasingly larger systems, a huge part of methods draw inspiration from biological systems (ants, bees...) where large amounts of agents interact (directly or indirectly) to produce emerging behaviour that results in an optimized solution to a physical problem. Biologically inspired multi-robot systems are usually referred to as robotic swarms (See e.g. \cite{beni1993swarm,dorigo2007swarm,blum2008swarm,kennedy2006swarm}). 

These biological methods have enabled plenty of theoretical developments, but the applicability of them is still sparse partly due to problem complexities hard to satisfy with minimalistic robots. We focus in this work on the well known foraging problem, defined as the dual problem of exploration/exploitation, where a number of agents start at a given point in space and must find a target in an unknown (possibly time varying) environment, while converging to cycle trajectories that enable them to exploit the found resources as efficiently as possible. Foraging has been extensively studied but it is still interesting when designing very large robotic systems since it combines exploration and on-line path planning, and the duality of exploration vs. exploitation is nowadays extremely relevant in Reinforcement Learning and other AI related fields \cite{thrun1992efficient,ishii2002control,nair2018overcoming}. In particular, this problem has been addressed with robotic ant-inspired swarms that use indirect communication through some ``pheromone" (either virtual or chemical) \cite{fujisawa2008communication,russell1997heat,johansson2009navigating}. Some early work was done in \cite{drogoul1993some} showing how robots can use pheromone based information to explore and collect. After that, authors in e.g. \cite{sugawara2004foraging,fujiswarms,campo2010artificial,
alers2014insect,10.1007/978-3-030-00533-7_11,10.1007/978-3-642-15461-4_8,4223153,4209807,DUCATELLE200925,svennebring2004building} have explored several practical methods to implement pheromone-based robotic navigation. However, very often complexities explode when designing very large multi-robot swarms, be it in terms of computation, data transmission or dynamic coupling. Additionally, these systems sometimes include implicit assumptions that in practice prevent them of being applied to large scenarios or real situations. These may be related to sensor range, memory storage in the agents, computational capacity or reliability of communications, among others.

\subsection{Related Work}
There have been multiple proposals for de-centralised multi robot foraging systems. Authors in \cite{payton2001pheromone} propose a multi-robot system that uses IR to communicate a pheromone based counter signal between robots in a network, cascading all agents' information through the network, such that agents find an unknown target in space and find trajectories back and forth. The authors in \cite{ducatelle2011communication} use an ant-inspired swarm to solve a foraging problem on a 2D space by assuming a connected line-of-sight communication network, and having agents flood this network with their estimation of their relative position and angle at every time step. 

In \cite{hrolenok2010collaborative,russell2015swarm}, authors use a combination of agents and beacon devices to guide navigation and store pheromone values. Authors treat pheromones as utility estimates for environmental states, and agents rely on line-of-sight communication and relative distances to the beacons to generate foraging trajectories. 
In \cite{nouyan2008path} the authors design a system where robots communicate with each other via LED signalling to indicate trails or vector fields pointing towards a given target, and provide empirical results on a wide number of scenarios. In \cite{reina2017ark,talamali2020sophisticated} the authors use a virtual-reality approach to implement the pheromone field,  allowing the robots to have access to this virtual pheromone from a central controller, enabling effective foraging.

It is worth noting that \cite{nouyan2008path, ducatelle2011communication, payton2001pheromone} assume agents in the swarm communicate directly with other agents, while \cite{hrolenok2010collaborative,russell2015swarm} de-couples this and proposes an environment-based interactions where agents only write and read data into locally reachable beacons. All represent collaborative methods, and assume knowledge of relative positions between agents. In \cite{lemmens2007bee,lemmens2009stigmergic}, authors propose bee-inspired path integration algorithms on a computational set-up, where agents use landmarks to store pheromone-based information when a change in direction is required.

\subsection{Main Proposal}
All the reviewed work requires robots to either have some form of position measurement (global or relative), some form of infrastructure or centralised knowledge entity, or both. These are specially critical when navigating in environments where any access to global or relative positioning is not possible, and where the infrastructure is limited (\emph{e.g.} space exploration). Additionally, some present strong requirements on communication of information in the swarm, either requiring connected line-of-sight communication or by cascading tables of agent data through the network. Our goal is here to design a minimalistic swarm system capable of solving the foraging problem by using a form of pheromone-inspired communication, without any such assumptions, and the following restrictions:
\begin{itemize}
\item Minimal assumptions on the robot capabilities. All agents are supposed to have equal characteristics (homogeneous system), do not have knowledge of relative (or global) positions with respect to other robots, and only need an orientation measure (a compass).
\item The system relies on one-hop communication only with limited range, and does not require direction or distance information on signals, nor line-of-sight connectivity.
\item Is fully distributed and needs no infrastructure in place.
\item Does so with robustness versus communication events or single agent failures.
\end{itemize}
\subsection{Preliminaries}
We use calligraphic letters for sets ($\mathcal{A}$), regular letters for scalars ($a\in\mathbb{R}$) and bold letters for vectors ($\mathbf{a}\in\mathbb{R}^{n}$). We consider discrete time dynamics $k\in\mathbb{N}$, and we define an inter-sampling time $\tau\in\mathbb{R}_+$ such that we keep a ``total" time measure  $t=\tau k$. With vectors we use $\|\mathbf{v}\|$ as the euclidean norm, and $\langle\mathbf{v}\rangle:=\frac{\mathbf{v}}{\|\mathbf{v}\|}$. We use the diagonal operator $D(\cdot)=\operatorname{diag}(\cdot)$.

\section{Problem Description}\label{sec:prob}
Take a swarm of $N$ agents $\mathcal{A}=\{1,2,...,N\}$ navigating in a bounded domain $\mathcal{D}\subset \mathbb{R}^2$, where $\mathcal{D}$ is compact (possibly non-convex). We define $\mathbf{x}_a(k)\in\mathcal{D}$ as the position of agent $a$ at time $t$, and velocity $\mathbf{v}_a(k) =v_0 (\cos (\alpha_a(k))\,\,\,\sin(\alpha_a(k)))^T$ with $\alpha_a(k)\in [-\pi,\pi )$ as its orientation. We define the dynamics of the system to be in discrete time, such that the positions of the agents evolve as
\begin{equation}
\mathbf{x}_a(k+1) = \mathbf{x}_a(k)+\mathbf{v}_a(k) \tau.
\end{equation}
Consider the case where the swarm is trying to solve a \emph{foraging problem}. 
\begin{definition}[Foraging Problem]
A foraging problem on an unknown domain $\mathcal{D}$ is the joint problem of finding a target region $\mathcal{T}\subset\mathcal{D}$ when starting on a different region $\mathcal{S}\subset\mathcal{D}$, $\mathcal{S}\cap\mathcal{T}=\emptyset$, and eventually following (semi) optimal trajectories between $\mathcal{S}$ and $\mathcal{D}$.
\end{definition}
The main goal when solving a foraging problem is to complete trajectories between $\mathcal{S}$ and $\mathcal{D}$ as fast as possible (through the shortest path), back and forth. We consider the foraging problem to be solved if, eventually, all agents in the swarm are able to follow trajectories that are close to the set of optimal trajectories. To design such a swarm, we make the following assumptions on the swarm agents' capabilities.
\begin{assumption}[Swarm Constrains]\label{as:1}\
\begin{enumerate}
\item Agents have a small memory, enough to store scalars and vectors in $\mathbb{R}^2$, and enough computational power to perform sums and products of vectors.
\item Agents have the ability to send and receive basic signals (up to 6 scalar values), within a maximum range $\delta$.
\item Agents have some form of collision avoidance mechanism,  acting independently of the design dynamics.
\item Agents have sensing ability to detect $\mathcal{S}$, $\mathcal{T}$.
\item Agents have some measure of angular orientation (\emph{e.g.} a compass).
\item Agents are able to remain static.
\end{enumerate}
\end{assumption}
Observe that we do not assume the ability to measure directionality in the signals, nor any form of self-localisation capacity. Additionally, the agents do not have access to any form of global information about $\mathcal{D}$, do not have unique identifiers and do not require line-of-sight interactions. The swarm does not require either any form of infrastructure in place. At last, the swarm relies on one-hop communication only. That is, the communication happens on a agent-to-agent basis, and agents do not cascade information through the communication network. The challenge to be solved in this work is then the following.
\begin{problem}
Design a swarm of $N$ agents that solves a foraging problem over an unknown domain $\mathcal{D}$, while satisfying the set of Assumptions \ref{as:1}, and does so with guarantees.
\end{problem}

\section{Proposal: Self Guided Swarm}\label{sec:swarmdesign}
We now present our design for a self-guided swarm that solves the foraging problem presented in Section \ref{sec:prob}. Our design is based on the idea of allowing agents in the swarm to behave as ``beacons" (agents in charge of guiding others) or ``foragers" (agents in charge of travelling from $\mathcal{S}$ to $\mathcal{T}$). Beacon agents store weight values $\omega^s_b(k)\in\mathbb{R}_+$ and guiding velocity vectors $\mathbf{u}_b^s(k)\in\mathbb{R}^2$, which they broadcast to the foragers in the swarm to generate foraging trajectories. We first describe the different modes the agents can operate in, the switching rules between modes, and then the dynamics in every mode.
\subsection{States and transitions}
Let us then split the swarm into three groups of agents. We will use the state variable $s_a(k)\in\{B,F_1,F_2\}$ to indicate:
\begin{enumerate}
\item $s_a(k) ={B}\Rightarrow$ $a$ is a beacon.
\item $s_a(k) = {F_1}\Rightarrow$ $a$ is a forager looking for $\mathcal{T}$.
\item $s_a(k) = {F_2}\Rightarrow$ $a$ is a forager looking for $\mathcal{S}$.
\end{enumerate}
We use $s=F_1,F_2$ as the different foraging states. Then, we can group the agents in time-dependent sub-sets: the beacon set $\mathcal{B}(k):\{a\in\mathcal{A}:s_a(k)=B\}$ and the forager sets $\mathcal{A}^s(k):=\{a\in\mathcal{A}:s_a(k)=s\}$.  

At $t=0$ all agents are initialised at $\mathcal{S}$. One agent is chosen to be the initial beacon, and all others are initialised as foragers looking for $\mathcal{T}$:
\begin{equation}
\mathbf{x}_a(0) \in \mathcal{S}\,\forall a\in\mathcal{A},\,\,\lvert\mathcal{B}(0)\rvert = 1,\,\lvert\mathcal{A}^{F_1}(0)\rvert = N-1.
\end{equation}
This initial beacon can be chosen at random, or based on some order of deployment of the swarm. Let us now define the regions of influence of every agent as $\mathcal{D}_a(k):=\{\mathbf{x}\in\mathcal{D}:\|\mathbf{x}-\mathbf{x}_a(k)\|_2\leq \delta\}$, for some maximum instrument range $\delta \in\mathbb{R}_+$. As time evolves, the agents switch between states following the logic rules $\forall a\in\mathcal{A}$:
\begin{equation}\label{eq:statetrans}
s_a(k+1) = \left\{\begin{array}{l}B\quad \text{if}\,\,\nexists\,\, b\in\mathcal{B}(k):\mathbf{x}_b(k)\in \mathcal{D}_a(k),\\
F_1\quad \text{if}\,\,s_a(k)=F_2\wedge \mathbf{x}_a(k)\in\mathcal{S},\\
F_2\quad \text{if}\,\,s_a(k)=F_1\wedge \mathbf{x}_a(k)\in\mathcal{T},\\
s_a(k)\quad \text{else},
\end{array}\right. .
\end{equation}
The switching rule in \eqref{eq:statetrans} is interpreted in the following way. If a forager moves to a point in the domain where there is no other beacons around, it becomes a beacon. If a forager is looking for the set $\mathcal{T}$ (mode 1) and finds it, it switches to finding the starting set $\mathcal{S}$ (mode 2), and the other way around. For now we do not consider transitions from beacon to forager.
\subsection{Dynamics}
We assume that beacons remain static while in beacon state:
\begin{equation}\label{eq:dynbeac}
\mathbf{v}_b(k) = \textbf{0},\,\,\mathbf{x}_b(k)=\mathbf{x}_b(k_b)\,\,\,\forall k\geq k_b, 
\end{equation}
where $k_b$ is the time step when agent $b$ switched to beacon state. Beacon agents store weight values $\omega^s_b(k)\in\mathbb{R}_+$ and guiding velocity vectors $\mathbf{u}_b^s(k)\in\mathbb{R}^2$, initialised at zero for all agents in the swarm. At every time-step, beacon agents broadcast their stored values $\omega^s_b(k),\mathbf{u}_b^s(k)$ with a signal of radius $\delta$. Let us define the set of neighbouring beacons to a forager in state $s$, $f\in\mathcal{A}^s(k)$ as $\mathcal{B}_f(k):=\{b\in\mathcal{B}(k):\mathbf{x}_b(k)\in\mathcal{D}_f(k)\}$, and the set of neighbouring foragers to a beacon $b\in\mathcal{B}(k)$ as $\mathcal{F}_b^s(k):=\{f\in\mathcal{A}^s(k):\mathbf{x}_f(k)\in\mathcal{D}_b(k)\}$. 

To account for possible disturbances in sensor measurements, we consider the disturbed weight and velocity values for a random variable $\mu_k\sim\mathcal{N}(1,\sigma^2)$,
\begin{equation}\begin{aligned}
\tilde{\omega}^s_b(k,\sigma):=&\lvert\mu_k\rvert\omega^s_b(k),\\ \tilde{\mathbf{v}}_f(k,\sigma) :=&D(\mu_k,\mu_k)\mathbf{v}_f(k).
\end{aligned}
\end{equation}

At every time-step each forager receives a set of signals from neighbouring beacons, and computes a reward function $\Delta^s_f(k)\in\mathbb{R}_+$:
\begin{equation}\label{eq:rew}
\Delta^s_f(k)=\gamma^s_f(k) + \lambda\max_{b\in\mathcal{B}_f(k)}\tilde{\omega}_b^s(k,\sigma),
\end{equation}
where $\lambda\in [0,1]$ is a discount factor, and for some $r\in\mathbb{R}_+$,
\begin{equation}
\gamma^s_f(k) = \left\{\begin{array}{l}r \quad \text{if}\,\,s_f(k)=F_1 \wedge \mathbf{x}_f(k)\in\mathcal{S},\\
r \quad \text{if}\,\, s_f(k)=F_2 \wedge \mathbf{x}_f(k)\in\mathcal{T},\\
0\quad\text{else}.
\end{array}\right.
\end{equation}
The reward function in \eqref{eq:rew} works as follows: Foragers listen for weight signals from neighbouring beacons and broadcast back the maximum discounted weight, plus a constant reward if they are in the regions $\mathcal{S}$ or $\mathcal{T}$ depending on their state. Observe that \eqref{eq:rew} depends on $s$, indicating that foragers listen and reinforce only the weights corresponding to their internal state value. The beacons update their weight values using a (possibly different) discount factor $\rho\in [0,1]$ as
\begin{equation}\label{eq:wupdate}
\omega^s_b(k+1) = (1-\rho) \omega^s_b(k)+\rho\frac{\sum_{f\in \mathcal{F}_b^s(k)}\Delta_f^s(k)}{\lvert\mathcal{F}_b^s(k)\rvert}.
\end{equation}
The iteration in \eqref{eq:wupdate} is only applied if there are indeed neighbouring foragers around a beacon, so $\lvert\mathcal{F}_b^s(k)\rvert\geq 1$. Otherwise, $\omega^s_b(k+1) =\omega^s_b(k)$.

The update rule of $\mathbf{u}_b^s(k)$ is similarly:
\begin{equation}\label{eq:nupdate}
\mathbf{u}^s_b(k+1) = (1-\rho) \mathbf{u}^s_b(k)-\rho\frac{\sum_{f\in \mathcal{F}_b^s(k)}\tilde{\mathbf{v}}_f(k,\sigma)}{\lvert\mathcal{F}_b^s(k)\rvert}.
\end{equation}
and again, $\lvert\mathcal{F}_b^s(k)\rvert=0\Rightarrow \mathbf{u}^s_b(k+1)=\mathbf{u}^s_b(k)$. At the same time that beacons update their stored weight values based on the foragers around them, they update as well the guiding velocity vectors by adding the corresponding velocities of the foragers around them (with an opposite sign). The logic behind this has to do with the reward function in \eqref{eq:rew}. Foragers looking for $\mathcal{T}$ update weights and guiding velocities associated with state $F_1$, but to indicate that they are in fact moving out of $\mathcal{S}$, we want to update the guiding velocities based on the opposite direction that they are following.

Until now we have defined the dynamics of the beacon agents: position and velocities in \eqref{eq:dynbeac} and update rules for $\omega^s_b(k),\,\mathbf{u}^s_b(k)$ in \eqref{eq:wupdate} and \eqref{eq:nupdate}. We have yet to define the dynamics of the foragers. At every step, the foragers listen for guiding velocity and weight signals from beacons around them. With this information, they compute the guiding vector:
\begin{equation}
\hat{\mathbf{v}}^s_f(k) :=v_0 \left\langle \sum_{b\in\mathcal{B}_f(k)}\tilde{\omega}_b^{\overline{s}}(k,\sigma)\mathbf{u}^{\overline{s}}_b(k)\right\rangle,
\end{equation}
where $\overline{s}$ is the opposite state to $s$, $s=F_1\iff \overline{s}=F_2$. At every time-step foragers choose stochastically, for a design exploration rate $\varepsilon\in (0,1)$, if they follow the guiding vector $\hat{\mathbf{v}}^s_f(k)$ or they introduce some random noise to their movement. Let $\alpha_u$ be a random variable taking values $(-\pi,\pi]$, following some probability density function $p(\alpha_u)$, and let $\tilde{\alpha}_a (k):=\alpha_a(k)+\alpha_u$. Then, $\forall f\in \mathcal{A}^s(k)$:
\begin{equation}\label{eq:prv}
\begin{array}{l}\Pr \{\mathbf{v}_f(k+1)= v_0\left(\begin{array}{c}
\cos \left(\tilde{\alpha}_a (k)\right)\\
\sin\left(\tilde{\alpha}_a (k)\right)
\end{array}\right)\}=\varepsilon,\\
\Pr \{\mathbf{v}_f(k+1)= \hat{\mathbf{v}}^s_f(k)\}=1-\varepsilon.
\end{array}
\end{equation}
Additionally, we add a fixed condition for an agent to turn around when switching between foraging states. That is, 
\begin{equation}\label{eq:v2}
\mathbf{v}_f(k+1) = -\mathbf{v}_f(k)\quad\text{if} \,\,s_f(k+1) \neq s_f(k).
\end{equation}
With \eqref{eq:prv} and \eqref{eq:v2} the dynamics of the foragers are defined too. We have experimentally verified that, over a diverse variety of set-ups, the weights $\omega_b^s(k)$ converge on average to a fixed point forming a gradient outwards from $\mathcal{S}$ and $\mathcal{T}$, therefore guiding the swarm to and from the goal regions.
\section{Results and Guarantees}
To show that the swarm finds the target region $\mathcal{T}$ and eventually converges to close to optimal trajectories between $\mathcal{S}$ and $\mathcal{T}$, let us first state the assumptions for the following theoretical results.
\begin{assumption}\label{as:2}\
\begin{enumerate}
\item Any region $\mathcal{D}_k\subseteq \mathcal{D}$ is a compact disc of radius $\delta$.
\item  $\tau<\frac{\delta}{2}$, and can be chosen small enough in comparison to the diameter of the domain $\mathcal{D}$.
\item The regions $\mathcal{S}$ and $\mathcal{T}$ are compact discs of radius, at least, $2\tau$.
\end{enumerate}
\end{assumption}
\subsection{Domain Exploration}
\begin{remark}\label{rem:1}
It holds by construction that $\exists b: \mathbf{x}_a(k) \in\mathcal{D}_b(k)\,\,\forall t,\,\forall a\in\mathcal{A}^s(k)$.
\end{remark}
From the transition rule in \eqref{eq:statetrans} it follows that, whenever $\mathbf{x}_a(k)\notin \mathcal{D}_b(k)$ for any beacon $b\in\mathcal{B}(k)$, it becomes a beacon, therefore covering a new sub-region of the space. To obtain the first results regarding the exploration of the domain, we can take inspirations from results in RRT exploration \cite{kuffner2000rrt}, where the agents perform a similar fixed length step exploration over a continuous domain. Let $p_a(\mathbf{x},k)$ be the agent probability density of point $\mathbf{x}$ at time $k$. We obtain the following results.
\begin{proposition}\label{prop:explore}
Let $\mathcal{D}$ be convex. Let some $a\in\mathcal{A}$ have $\mathbf{x}_a(k_0) = \mathbf{x}_0$. Then, for any convex region $\mathcal{D}_n\subset \mathcal{D}$ of non-zero volume, there exists $\kappa_n \in \mathbb{R}_+$ and time $k_n\in\mathbb{N}$ such that
\begin{equation}
\,\Pr[\mathbf{x}_a(k_n+k_0)\in\mathcal{D}_n\,\mid \,\mathbf{x}_a(k_0)]\geq \varepsilon^{k_n} \kappa_n.
\end{equation}
\end{proposition}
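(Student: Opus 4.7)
The plan is to lower-bound the probability by constructing an explicit sequence of exploration moves that funnels the agent from $\mathbf{x}_0$ into $\mathcal{D}_n$, in the spirit of the RRT reachability arguments the authors invoke. Since $\mathcal{D}_n$ is convex and has non-zero volume, it contains a closed ball $B(\mathbf{y}^*, r_n) \subseteq \mathcal{D}_n$ with $r_n > 0$, and by convexity of $\mathcal{D}$ the segment $[\mathbf{x}_0, \mathbf{y}^*]$ lies entirely in $\mathcal{D}$. Setting $k_n := \lceil \|\mathbf{y}^* - \mathbf{x}_0\|/(v_0\tau)\rceil$, I place waypoints $\mathbf{y}_0 = \mathbf{x}_0, \mathbf{y}_1, \dots, \mathbf{y}_{k_n} = \mathbf{y}^*$ equally spaced along this segment, so that $\|\mathbf{y}_i - \mathbf{y}_{i-1}\| \leq v_0\tau$ for each $i$.

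Next I would pick a tube radius $\rho \in (0, r_n)$ sufficiently small relative to $v_0\tau$, and argue inductively that whenever $\mathbf{x}_a(k_0+i-1) \in B(\mathbf{y}_{i-1}, \rho)$, there is an angular window $I_i \subset (-\pi, \pi]$ of length at least $2\eta$ such that taking the exploration branch of \eqref{eq:prv} with $\alpha_u \in I_i$ produces $\mathbf{x}_a(k_0+i) \in B(\mathbf{y}_i, \rho)$. Here $\eta > 0$ is a geometric constant depending only on $\rho$ and $v_0\tau$, obtained from a small-angle estimate on the circle of radius $v_0\tau$ centred at the current position: for $\rho$ small enough, even the worst starting point in $B(\mathbf{y}_{i-1}, \rho)$ still admits a nontrivial arc of displacement directions landing inside $B(\mathbf{y}_i, \rho)$.

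Combining the per-step estimates, the exploration branch is chosen with probability $\varepsilon$ by \eqref{eq:prv}, and conditional on that branch the probability that $\alpha_u$ falls in $I_i$ is at least $2\eta\, p_{\min}$, where $p_{\min} := \inf_{\alpha \in \bigcup_i I_i} p(\alpha) > 0$ is a uniform lower bound on the density of $\alpha_u$ over the finitely many required arcs (implicitly assuming the exploration distribution has positive mass on every direction needed, which is natural for a fair exploration prior). Chaining these conditionally independent single-step events yields
\begin{equation*}
\Pr[\mathbf{x}_a(k_0+k_n) \in \mathcal{D}_n \mid \mathbf{x}_a(k_0)] \geq \bigl(\varepsilon \cdot 2\eta\, p_{\min}\bigr)^{k_n} = \varepsilon^{k_n} \kappa_n,
\end{equation*}
with $\kappa_n := (2\eta\, p_{\min})^{k_n}$, which is the required bound.

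The hard part will be the uniform geometric estimate in the inductive step: the window $I_i$ depends on where exactly the agent sits inside $B(\mathbf{y}_{i-1}, \rho)$, so I must verify that a single $\eta > 0$ works for all such positions and all indices $i$. This reduces to showing that $\rho/(v_0\tau)$ can be made small enough that the intersection of the displacement circle with $B(\mathbf{y}_i, \rho)$ remains an arc of uniformly positive angular length. A secondary subtlety is that \eqref{eq:statetrans} may convert $a$ into a beacon mid-trajectory and freeze it; this can be absorbed into $\kappa_n$ by restricting to the sub-event ``$a$ remains a forager throughout $[k_0, k_0+k_n]$'', which is already implied (up to a constant factor) by the same funnel construction since the tube stays near the straight line from $\mathbf{x}_0$ to $\mathbf{y}^*$ and inherits the positive lower bound on probability.
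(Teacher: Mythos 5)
Your overall strategy (a waypoint funnel along the segment $[\mathbf{x}_0,\mathbf{y}^*]$ with a per-step angular-window estimate) is a legitimate alternative to the paper's argument, which instead tracks the growing reachable sets $\mathcal{X}_a(k)$ (a circle after one step, a full disc after two, and discs of radius growing linearly in $k$ thereafter) and integrates the positive density over $\mathcal{D}_n$. However, your inductive step has a genuine gap. Under \eqref{eq:prv} the agent's displacement in one step has length exactly $v_0\tau$: the one-step reachable set from a point is a \emph{circle}, not a disc. Your waypoints are spaced $d=\|\mathbf{y}^*-\mathbf{x}_0\|/k_n\leq v_0\tau$, and generically $d<v_0\tau$ strictly. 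Then for any $\mathbf{x}\in B(\mathbf{y}_{i-1},\rho)$ with $\|\mathbf{x}-\mathbf{y}_i\|<v_0\tau-\rho$ (already the very first step, where the agent sits exactly at $\mathbf{y}_0$ and $\|\mathbf{y}_1-\mathbf{y}_0\|=d<v_0\tau$), every point of the displacement circle lies at distance greater than $\rho$ from $\mathbf{y}_i$, so the arc $I_i$ is \emph{empty} and no $\eta>0$ exists. Shrinking $\rho/(v_0\tau)$, which you propose as the resolution of "the hard part," makes this worse, not better: the obstruction is the mismatch between the fixed step length $v_0\tau$ and the spacing $d$, not the tube width. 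The repair is precisely the observation the paper's proof is built on: after \emph{two} consecutive exploration steps the reachable set is a full disc of radius $2v_0\tau$ (any net displacement of length in $[0,2v_0\tau]$ is achievable by bending the two steps), so you should either run the funnel in two-step blocks, or space the waypoints at exactly $v_0\tau$ and absorb the fractional remainder into a final two-step manoeuvre; either way $k_n$ changes by at most a factor of two and the form $\varepsilon^{k_n}\kappa_n$ survives.

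Two further caveats, both shared with the paper's own sketch and therefore not counted heavily against you. First, the uniform bound $p_{\min}$ needs $p(\alpha_u)$ bounded away from zero on all of $(-\pi,\pi]$: the windows $I_i$ are shifted by the heading accumulated from the random past, so an infimum over "the finitely many required arcs" is not well defined a priori. Second, conversion of $a$ into a beacon via \eqref{eq:statetrans} is determined by the positions of the \emph{other} agents, not by $a$'s own randomness, so it cannot be "absorbed into $\kappa_n$" by your funnel construction alone; the proposition implicitly concerns a forager that remains a forager over the horizon, a point the paper's proof also leaves unaddressed.
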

\begin{proof}[Sketch of proof]
At every time-step $t$, with probability $\varepsilon$, it holds $\alpha_a(k+1) = \alpha_a(k) + \alpha_u,$ and observe that $\alpha_a(k+1)\in (-\pi,\pi]$. Let agent $a$ be at point $\mathbf{x}_0$ at time $k_0$, and take the event where for every time-step after $k_0$ the agent always chooses the random velocity. For $k=k_0+1$, the set of points $\mathcal{X}_a(1)\subseteq\mathcal{D}$ satisfying $p_a(k_0+1,\mathbf{x})>0$ is $\mathcal{X}_a(k_0+1) = \{x\in\mathcal{D}:\|x-\mathbf{x}_0\|_2 = \tau\}.$ One can verify that $\mathcal{X}_a(k_0+1)$ forms a circle in $\mathbb{R}^2$ around $\mathbf{x}_0$. 
Now for $kl=k_0+2$, the set $\mathcal{X}_a(k_0+2)$ is
\begin{equation}\label{eq:ball}
\mathcal{X}_a(k_0+2) = \{\mathbf{x}\in\mathcal{D}:\|\mathbf{x}-\mathbf{x}_1\|_2 = \tau ,\, \mathbf{x}_1\in\mathcal{X}_a(k_0+1) \}.
\end{equation}
In this case, the set in \eqref{eq:ball} forms a ball of radius $2\tau$ around $\mathbf{x}_0$.
At $k=k_0+2$, it holds from \eqref{eq:ball} that $p_a(\mathbf{x},k_0+2)>0\,\forall \,\mathbf{x}\in\mathcal{X}_a(2)$. Then, for any subset $\mathcal{D}_2\subseteq\mathcal{X}_a(2)$, it holds that $\Pr [\mathbf{x}_a(2)\in \mathcal{D}_2\,\mid\,\mathbf{x}_a(k_0)] \geq\varepsilon^2 \kappa_2,$ where $\kappa_2\in\mathbb{R}_+$ is a function of the set $\mathcal{D}_2$ and the probability density $p_a(\mathbf{x},k_0+2)$. The sets $\mathcal{X}_a(k)$ are balls centred at $\mathbf{x}_0$ and radius $k\,\tau$, and $p_a(\mathbf{x},k)>0\,\forall \,x\in\mathcal{X}_a(k)$. Let at last $\mathcal{D}_n$ be any convex subset of $\mathcal{D}$ with non-zero volume, and $k_n=\min\{k:\mathcal{D}_n \subset\mathcal{X}_a(k)\} $. Then, $\Pr [\mathbf{x}_a(k_0+k_n)\in \mathcal{D}_n\,\mid\,\mathbf{x}_a(k_0)] \geq \varepsilon^{k_n} \kappa_n$ for some $\kappa_n>0$.
%\Pr [\mathbf{x}_a(k)\in \mathcal{D}_k] = \varepsilon^k \int_{\mathcal{D}_k}p_a(x,k)dx >\varepsilon^k \kappa_k,
%\end{equation}

\end{proof}
Now we can draw a similar conclusion for the case where $\mathcal{D}$ is non-convex.
\begin{lemma}\label{lem:1}
Let $\mathcal{D}$ be a non-convex connected domain. Let some $a\in\mathcal{A}$ have $\mathbf{x}_a(k_0) = \mathbf{x}_0$. Then, for any convex region $\mathcal{D}_n\subset \mathcal{D}$ of non-zero volume, there exists $\tau>0$ and $\kappa_n>0$ such that we can find a finite horizon $k_n\in\mathbb{N}$:
\begin{equation}
\Pr[\mathbf{x}_a(k_n+k_0)\in\mathcal{D}_n\,\mid\,\mathbf{x}_a(k_0)]\geq \varepsilon^{k_n} \kappa_n.
\end{equation}
\end{lemma}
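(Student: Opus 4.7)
The plan is to reduce the non-convex case to a chain of convex cases and then apply Proposition \ref{prop:explore} repeatedly via the Markov property of the dynamics. Because $\mathcal{D}$ is connected (and, being a subset of $\mathbb{R}^2$, path-connected for the kind of domains we consider), we can pick a continuous path $\gamma:[0,1]\to\mathcal{D}$ with $\gamma(0)=\mathbf{x}_0$ and $\gamma(1)\in\mathcal{D}_n$. Since $\gamma([0,1])$ is compact and contained in the open interior of $\mathcal{D}$ (up to a small retraction away from the boundary, which is possible since $\mathcal{D}_n$ has non-zero volume), there exists $\eta>0$ such that the tubular neighbourhood $\{\mathbf{x}\in\mathbb{R}^2:\operatorname{dist}(\mathbf{x},\gamma([0,1]))\leq\eta\}$ is fully contained in $\mathcal{D}$.

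Next I would cover $\gamma([0,1])$ by a finite sequence of open balls $\mathcal{B}_1,\mathcal{B}_2,\dots,\mathcal{B}_M$ of radius at most $\eta/2$, centred on points along $\gamma$ and chosen so that consecutive balls overlap with non-zero volume intersection, $\mathbf{x}_0\in\mathcal{B}_1$, and $\mathcal{B}_M\cap\mathcal{D}_n$ has non-zero volume. Each $\mathcal{B}_i$ is convex and lies in $\mathcal{D}$, so Proposition \ref{prop:explore} applies locally within each ball. Choose $\tau$ small enough (say $\tau<\eta/4$) so that one random step from any point of $\mathcal{B}_i$ stays inside the tube, and so that the constructive ball-growing argument in the proof of Proposition \ref{prop:explore} is valid inside each $\mathcal{B}_i$.

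Using the strong Markov property with the stopping times $k_i$ at which the agent first enters $\mathcal{B}_{i}\cap\mathcal{B}_{i+1}$, Proposition \ref{prop:explore} gives, for each $i$, a finite horizon $k^{(i)}\in\mathbb{N}$ and a constant $\kappa^{(i)}>0$ with
\begin{equation}
\Pr\bigl[\mathbf{x}_a(k_i+k^{(i)})\in\mathcal{B}_{i}\cap\mathcal{B}_{i+1}\,\mid\,\mathbf{x}_a(k_i)\in\mathcal{B}_i\bigr]\geq \varepsilon^{k^{(i)}}\kappa^{(i)}.
\end{equation}
Chaining these $M$ bounds, and adding one more local application of Proposition \ref{prop:explore} inside $\mathcal{B}_M$ to land in $\mathcal{D}_n\cap\mathcal{B}_M$, produces a total horizon $k_n=\sum_{i=1}^{M} k^{(i)}$ and an aggregate constant $\kappa_n=\prod_{i=1}^{M}\kappa^{(i)}$ satisfying the claimed bound with exponent $k_n$ on $\varepsilon$.

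The main obstacle will be the geometric bookkeeping: verifying that the cover $\{\mathcal{B}_i\}$ can always be chosen with $\mathcal{B}_i\subset\mathcal{D}$ and with intersections of non-zero volume when $\mathcal{D}$ has thin passages near its boundary, and tying the choice of $\tau$ to the width of those passages and to the radii of the cover balls so that the conclusion of Proposition \ref{prop:explore} transfers from each $\mathcal{B}_i$ to $\mathcal{D}$ without the agent being forced out of $\mathcal{D}$ by its random step. The collision-avoidance assumption in Assumption \ref{as:1} is what keeps the agent inside $\mathcal{D}$, but one must take care that it does not interfere with the lower bound coming from the random-step probability $\varepsilon$, which is handled by taking $\tau$ small compared with $\eta$.
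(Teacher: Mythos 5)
Your proposal follows essentially the same route as the paper's own sketch: exploit connectedness to build a finite chain of overlapping balls from $\mathbf{x}_0$ to $\mathcal{D}_n$ that stay inside $\mathcal{D}$, choose $\tau$ small relative to the clearance from the boundary, and apply Proposition~\ref{prop:explore} recursively on each ball, multiplying the per-segment bounds to obtain $\varepsilon^{k_n}\kappa_n$ with $k_n$ the summed horizons. Your version is somewhat more explicit about the tubular neighbourhood and the Markov chaining of the conditional probabilities, but the decomposition and the key idea are the same.
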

\begin{proof}[Sketch of proof]
If $\mathcal{D}$ is connected, then for any two points $\mathbf{x}_0,\mathbf{x}_n\in\mathcal{D}$, we can construct a sequence of balls $\{\mathcal{X}_0,\mathcal{X}_1,...,\mathcal{X}_{k_n}\}$ of radius $R\geq \tau$ centred at $\mathbf{x}_0,\mathbf{x}_1,...,\mathbf{x}_{k_n}$ such that the intersections $\mathcal{X}_i\cap\mathcal{X}_{i+1}\neq\emptyset$ and are open sets, and $\mathbf{x}_{i}\in\mathcal{X}_{i-1}$. Then, we can pick $\tau$ to be small enough such that every ball $\mathcal{X}_i\subset \mathcal{D}$ does not intersect with the boundary of $\mathcal{D}$, and we can apply now Proposition \ref{prop:explore} recursively at every ball. If $\|\mathbf{x}_i-\mathbf{x}_{i-1}\|_2 < 2\tau$, then from Proposition \ref{prop:explore} we know $p(\mathbf{x}_i,k_{i-1}+2)>0$ since, for a given $\mathbf{x}_{i-1}$ and $k_{i-1}$, any point $\mathbf{x}_{i}$ has a non-zero probability density in at most 2 steps. Then, it holds that $p(\mathbf{x}_n,k_{n-1}+2)>0$ for some $k_{n-1}\in [k_n, 2 k_n]$, and for a target region $\mathcal{D}_n$: $\Pr[\mathbf{x}_a(k_n+k_0)\in\mathcal{D}_n\,\mid\,\mathbf{x}_a(k_0)] \geq \varepsilon^{k_n} \int_{\mathcal{D}_n}p_a(\mathbf{x}_n,k_n+k_0) d\mathbf{x}\geq \varepsilon^{k_n} \kappa_n$ for some $\kappa_n>0$.
\end{proof}
It follows from Lemma \ref{lem:1} that for a finite domain $\mathcal{D}$ every forager agent visits every region infinitely often. 

For a given initial combination of foraging agents, we have now guarantees that the entire domain is be explored and covered by beacons as $t\to\infty$.

\subsection{Foraging}
We leave for future work the formal guarantees regarding the expected weight field values $\omega_b^s(k)$ and guiding velocities $\mathbf{u}_b^s(k)$. Based on existing literature \cite{payton2001pheromone,djarhscc}, one could model the network of beacons as a discrete graph with a stochastic vertex weight reinforcement process based on the movement of the robots to proof convergence of both $\omega_b^s(k)$ and $\mathbf{u}_b^s(k)$. Such results would also allow us to study the limiting distribution of the robots across the space, as well as their trajectories. In this work we have experimentally verified that, over a diverse variety of set-ups, the weights $\omega_b^s(k)$ converge on average to a fixed point forming a gradient outwards from $\mathcal{S}$ and $\mathcal{T}$, therefore guiding the swarm to and from the goal regions.

\section{Experiments}\label{sec:exp}
For the experiments we present an application running on real Elisa-3 robot (GCTronic), and a more extensive statistical analysis of simulations running on \emph{Webots} \cite{michel2004cyberbotics}. We used the \emph{Webots} simulator for the implementation of the work in a robotic swarm, given its capabilities to simulate realistic robots in physical environments, and since it includes realistic Elisa-3 robot models, which  characteristics satisfy the restrictions in Assumption \ref{as:1} (using odometry for direction of motion measures).

The swarm agents are able to listen almost-continuously, store any incoming signals in a buffer, and empty the buffer every $\tau$ seconds. The parameters used for both the simulation and the real swarm are presented in Table \ref{table:first}.
\begin{table}[h]\centering
\begin{tabular}{ |c|c|c|c|c|c|c|c| }
  \hline
  \multicolumn{8}{|c|}{Parameters} \\
  \hline
  $\rho$ & $\lambda$&$r$& $\varepsilon$&$\tau\, (s)$&$\delta\,(m)$ & $v_0 (\frac{m}{s})$ & $\sigma^2$ \\
  \hline
  $0.01$ & $0.8$ & $1$ & $0.05$& $1$& $0.4$ & $0.25$ & $0.01$\\
  \hline
\end{tabular}
\vspace{2mm}
\caption{\label{table:first} Simulation parameters}
\end{table}
On both scenarios, the maximum range $\delta$ is virtually limited to restrict the communication range and simulate larger environments where not all robots are capable of reaching each-other. Additionally, the robots use a built-in collision avoidance mechanism, which is triggered when the infra-red sensors detect an obstacle closer than 2cm.

\subsection{Simulated Swarm}
We simulated first a general scenario where a swarm of size $N$ is deployed around a starting region and needs to look for a target in an arena of size $2.5m\times 3 m$, with and without obstacles. The robots act according to the rules in Algorithms \ref{al:1}, \ref{al:2}, which correspond to the dynamics covered in Section \ref{sec:swarmdesign}. In these first simulations we keep beacon agents static and without any switching.
\begin{algorithm}
\caption{Behaviour of Beacons}\label{al:1}
\begin{algorithmic}
\While{$s_b(k)=0$}
\State Broadcast $\omega_b^s(k),\mathbf{u}_b^s(k)$;
\State Listen for signals during $\tau$ seconds;
\State Compute $\omega_b^s(k+1),\mathbf{u}_b^s(k+1)$;
\State Move according to $\mathbf{v}_b(k+1)$;
   		\If{\emph{Obstacle}}
   			\State Move to avoid obstacle;
   			\EndIf
\State Check transitions in \eqref{eq:statetrans};
\EndWhile
\end{algorithmic}
\end{algorithm}
\begin{algorithm}
\caption{Behaviour of Foragers}\label{al:2}
\begin{algorithmic}
   \While{$s_f(k)\neq 0$}
\State Listen for signals during $\tau$ seconds;
\State Broadcast $\tilde{\mathbf{v}}_f(k,\sigma),\Delta_f^s(k)$;	
\State Compute $\mathbf{v}_f(k+1)$ from \eqref{eq:prv} and \eqref{eq:v2};
\State Move according to $\mathbf{v}_f(k+1)$;
\State \Comment{Asynchronous action}
   		\If{\emph{Obstacle}}
\State Move to avoid obstacle;
   			\EndIf
\State Check transitions in \eqref{eq:statetrans};
\EndWhile
\end{algorithmic}
\end{algorithm}

A first result on the swarm configuration under these parameters is shown in Figures \ref{fig:swarm1}, \ref{fig:swarm2}. 
\begin{figure}
     \centering
     \begin{subfigure}[b]{\linewidth}
         \centering
         \includegraphics[width=0.8\linewidth]{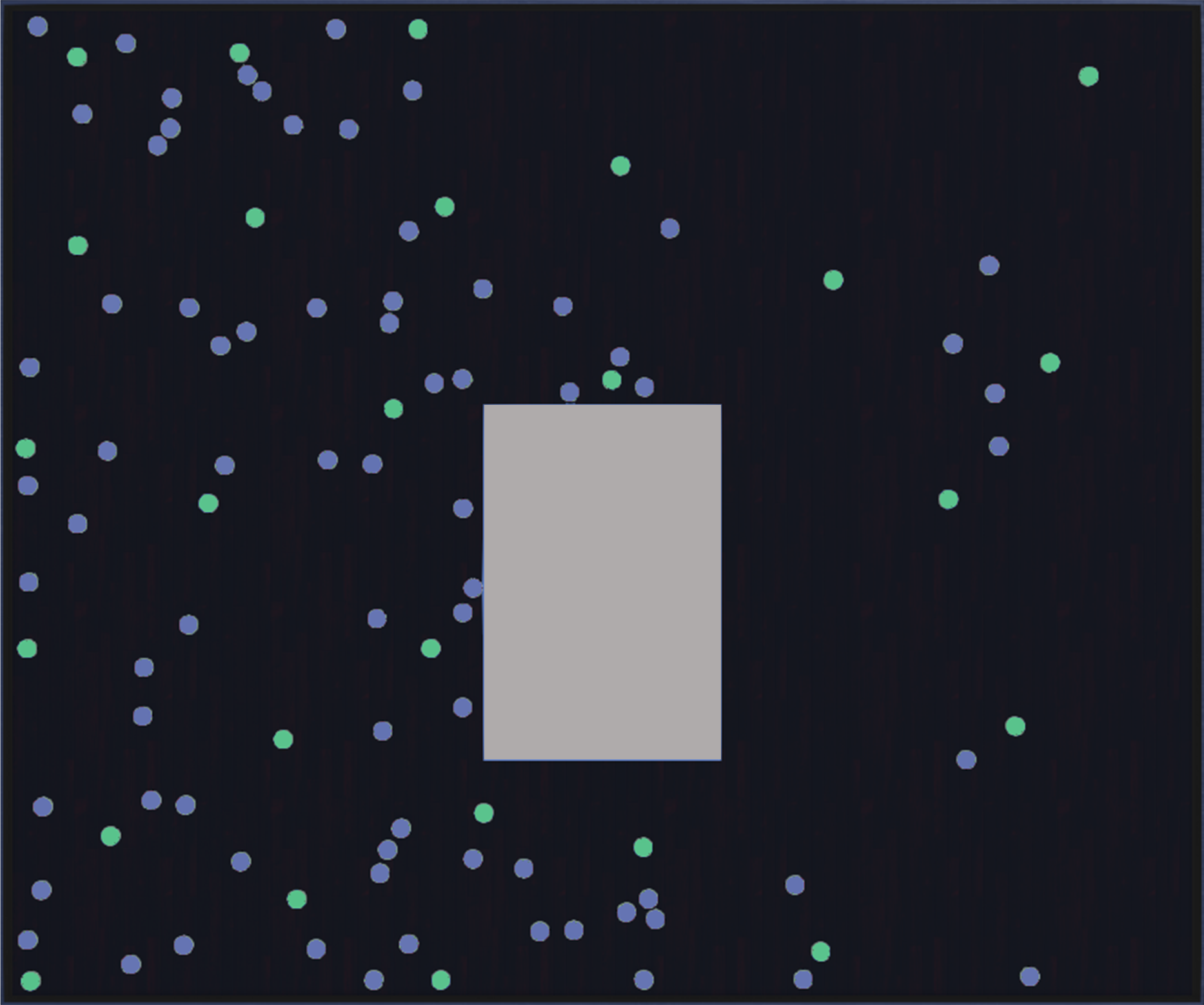}
         \caption{Real Swarm}
         \label{fig:real1}
     \end{subfigure}
     \hfill
     \begin{subfigure}[b]{\linewidth}
         \centering
         \includegraphics[width=0.8\linewidth]{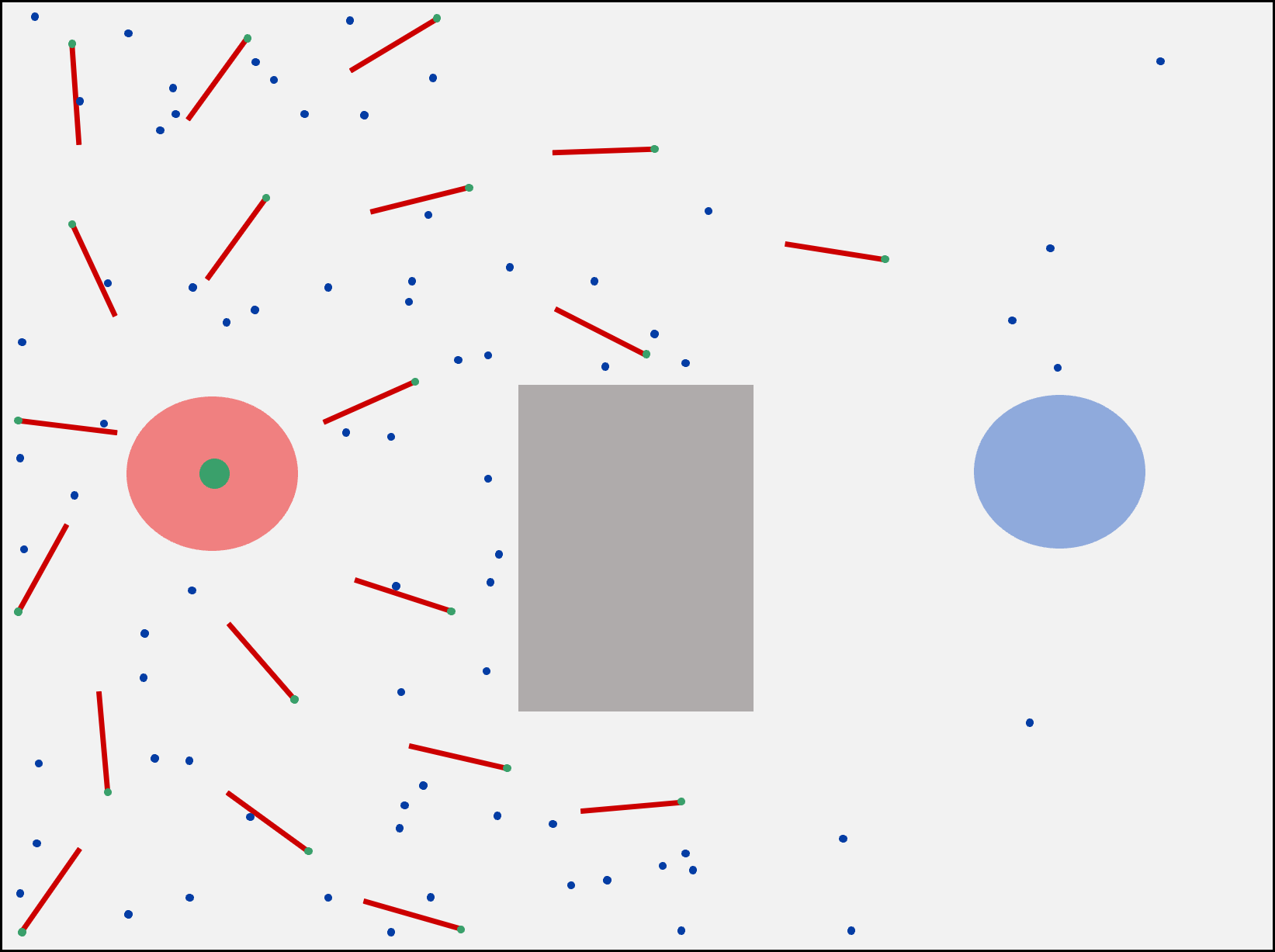}
         \caption{Swarm State}
         \label{fig:state1}
     \end{subfigure}
        \caption{Foraging Swarm with obstacle at $t=50s$}
        \label{fig:swarm1}
\end{figure}

\begin{figure}
     \centering
     \begin{subfigure}[b]{\linewidth}
         \centering
         \includegraphics[width=0.8\linewidth]{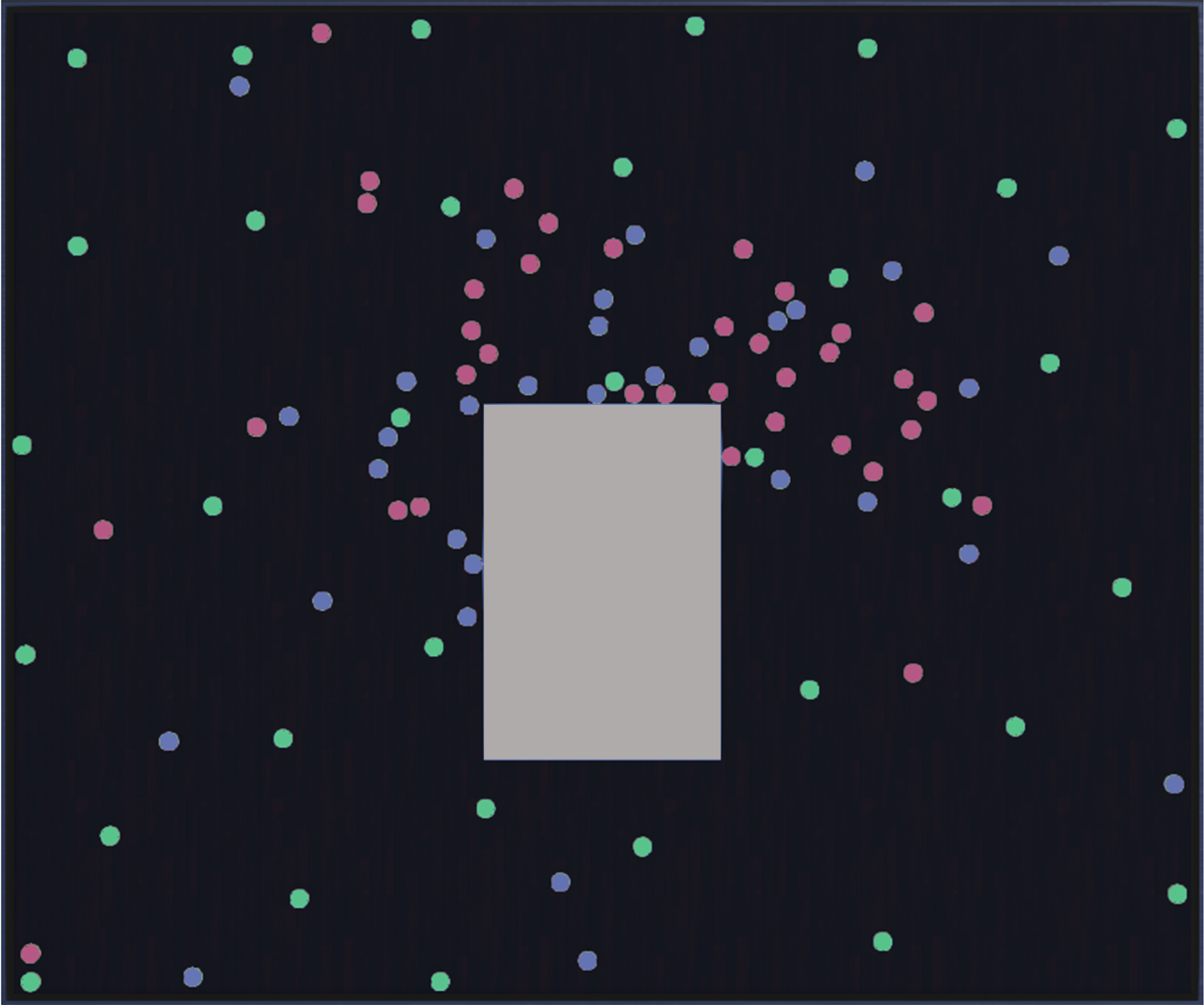}
         \caption{Real Swarm}
         \label{fig:real2}
     \end{subfigure}
     \hfill
     \begin{subfigure}[b]{\linewidth}
         \centering
         \includegraphics[width=0.8\linewidth]{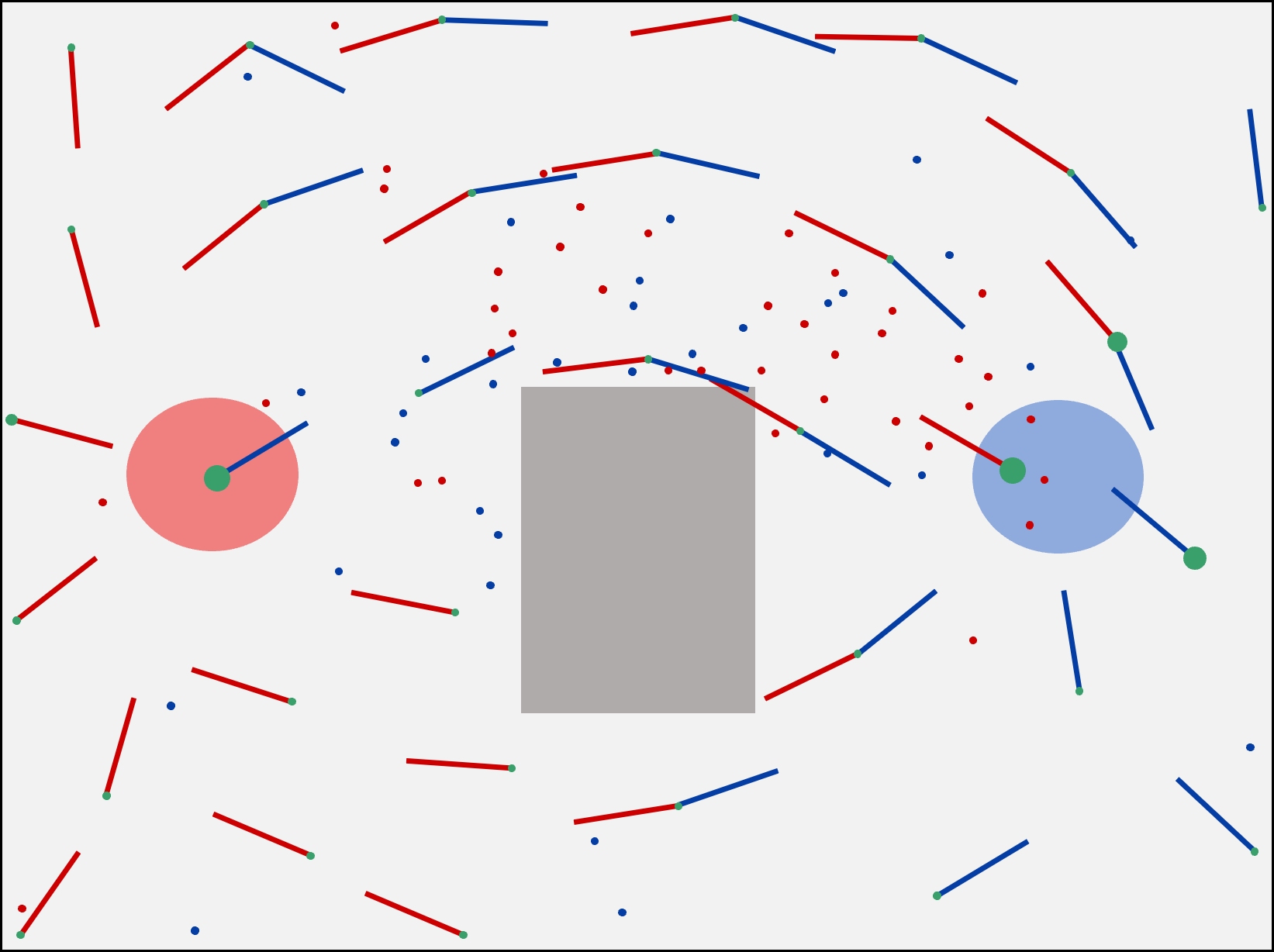}
         \caption{Swarm State}
         \label{fig:state2}
     \end{subfigure}
        \caption{Foraging Swarm with obstacle at $t=300s$}
        \label{fig:swarm2}
\end{figure}

The robots are released in batches, to avoid overcrowding in the starting region. Figures \ref{fig:real1} and \ref{fig:real2} are the real swarm on the Webots simulator, with blue pucks being beacons, green and red pucks being foragers in different states. Figures \ref{fig:state1} \ref{fig:state2} are a simplified visualization of the exact same frame, including information on $\omega(k),\,\mathbf{u}(k)$. The green circles are the nest regions, red circles are the target regions. The size of black dots represent relative amount of weights stored at the beacons, and the plotted vectors are the guiding velocities $\mathbf{u}^s(k)$ at each beacon. 

One can see how the swarm is able to navigate to the target set, and does so around the shortest path. Note however how the random motion still affects some members of the swarm, and how the agents' accumulation seems to be restricted by how well the beacons represent the desired velocity field. This can be clearly seen in Figure \ref{fig:state2}. Most swarm members are indeed moving back and forth from $\mathcal{S}$ to $\mathcal{T}$, but do so spreading between the rows of beacons closer to the minimum length path.

\subsubsection{Performance Metrics}
To fully evaluate the performance of the method we now include results regarding convergence and efficacy metrics. We are interested in measuring two main properties in such swarm:
\begin{enumerate}
\item Foraging performance. 
\item Accumulation of agents around foraging trajectories.
\end{enumerate}
To measure the foraging performance, we use the navigation delay of the swarm. Navigation delay is used as a performance measure of foraging problems \cite{ducatelle2011communication}, and it is defined as the average time it takes an agent to travel back and forth between the target regions. For a finite horizon $t\in[t_0,T]$,
\begin{equation}
d(t_0,T) = \frac{1}{\lvert\mathcal{A} \rvert}\sum_{i\in\mathcal{A}}\frac{T-t_0}{\# \text{trips}_i}.
\end{equation} 

Accumulation of agents around trajectories (or path creation) is an example of emergent self-organized behaviour. To measure this, entropy has been used to quantize forms of clustering in robots, and to investigate if stable self-organization arises. We use the hierarchic social entropy as defined by \cite{balch2000hierarchic} using single linkage clustering. Two robots are in the same cluster if the relative position between them is smaller than $h$. Then, with $\mathcal{C}(t,h)$ being the set of clusters at time $t$ with minimum distance $h$, $c_i\in\mathcal{C}(t,h)$ and $\mathcal{A}_{c_i}$ being the subset of agents in cluster $c_i$, entropy for a group of robots is
\begin{equation}
H(\mathcal{A}) = - \sum_{c_i\in\mathcal{C}(t,h)}\frac{\lvert\mathcal{A}_{c_i}\rvert}{N} \log_2(\frac{\lvert\mathcal{A}_{c_i}\rvert}{N}).
\end{equation}
Hierarchic social entropy is then defined as integrating $H(R,h)$ over all values of $h$, $S(R) = \int_{\delta_0}^\infty H(R,h)dh.$ To allow for zero entropy, we take the lower limit of the integral to be the diameter of the robots ($\delta_0=4cm$).
\subsubsection{Results}
For our experiments, we consider swarm size between $N\in [49,157]$. We evaluate the system in an arena of $7.5m^2$, and study 2 cases (with and without central obstacle). The results for the navigation delay are presented in Figures \ref{fig:metrics} and \ref{fig:metricsnoobs}. All simulations are run for horizons $T=400s$, since that was observed to be enough for the swarm to stabilize.
\begin{figure}
     \centering
     \begin{subfigure}[b]{\linewidth}
         \centering
         \includegraphics[width=0.875\linewidth]{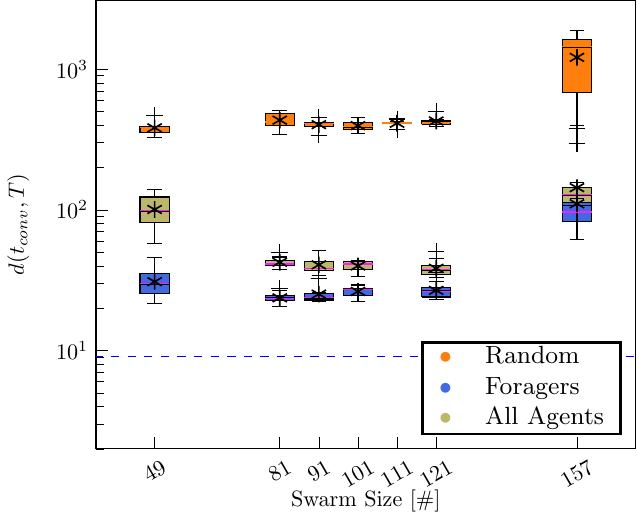}
         \caption{Swarm with symmetrical obstacle.}
         \label{fig:metrics}
         \vspace{0.5cm}
     \end{subfigure}
     \hfill
     \begin{subfigure}[b]{\linewidth}
         \centering
         \includegraphics[width=0.875\linewidth]{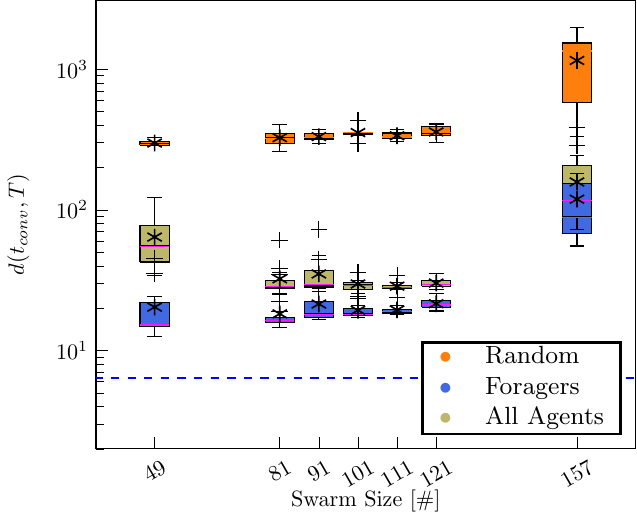}
         \caption{Swarm without obstacle.}
         \label{fig:metricsnoobs}
     \end{subfigure}
        \caption{Navigation times for $[t_{conv},T]$.}
\end{figure}
Each data point includes results from 12 independent test runs. The navigation delay is measured for three different scenarios: 
\begin{enumerate}
\item Orange: Agents moving at random for $(t_{\text{conv}},T_{\max})$.
\item Yellow: All agents $\mathcal{A}$ when following self-guided foraging for $(t_{\text{conv}},T)$.
\item Blue: Only forager agents $\mathcal{A}^s$ for the horizon $(t_{\text{conv}},T)$.
\end{enumerate}
To get a conservative value for $t_{\text{conv}}$, we define it as the first time step when an agent completes the first full foraging trip back and forth. We also include an absolute lower bound, corresponding to the absolute minimum possible travel time in the considered scenario. 

We can now extract several conclusions with respect to the swarm sizes. For a size of $N=49$ or smaller too many agents are needed as beacons, hence the performance (specially when considering the full swarm) is significantly worse than for bigger swarms. We can see that, for growing swarm sizes, the performance increases and the variance in the results reduces, until a certain point (around $N=120$) where this tendency reverses. This is due to the \emph{overcrowding} effect. The robots occupy space, and need to perform obstacle avoidance manoeuvres to not collide with each-other. The robots are around $4$cm in diameter, and start collision avoidance manoeuvres when they are close ($\approx 2cm$) to an obstacle or another robot. Since the experiments are run on a small arena, the swarm reaches a point where there can be too many robots in the same space. Another point worth noting is that the lower bound (blue dashed line) in Figures \ref{fig:metrics} and \ref{fig:metricsnoobs} is extremely conservative, and represents the case where robots know and follow the optimal path to the target.

\begin{figure}
     \centering     
     \begin{subfigure}[b]{\linewidth}
         \centering
         \includegraphics[width=0.9\linewidth]{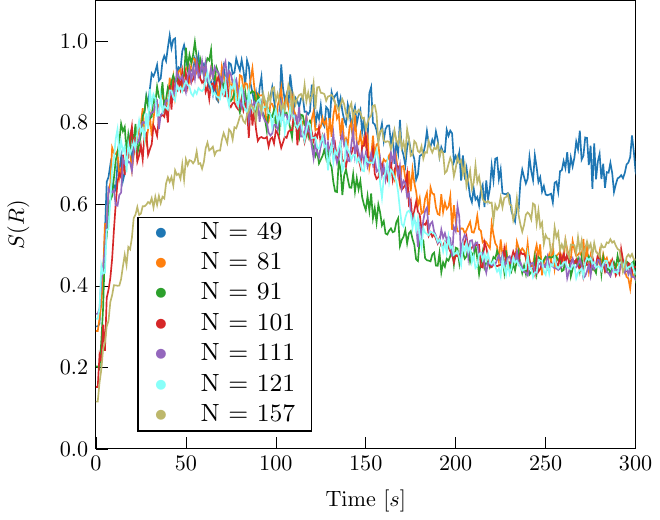}
         \caption{Swarm with symmetrical obstacle.}
         \label{fig:entropytrue}
         \vspace{0.5cm}
     \end{subfigure}
          \begin{subfigure}[b]{\linewidth}
         \centering
         \includegraphics[width=0.9\linewidth]{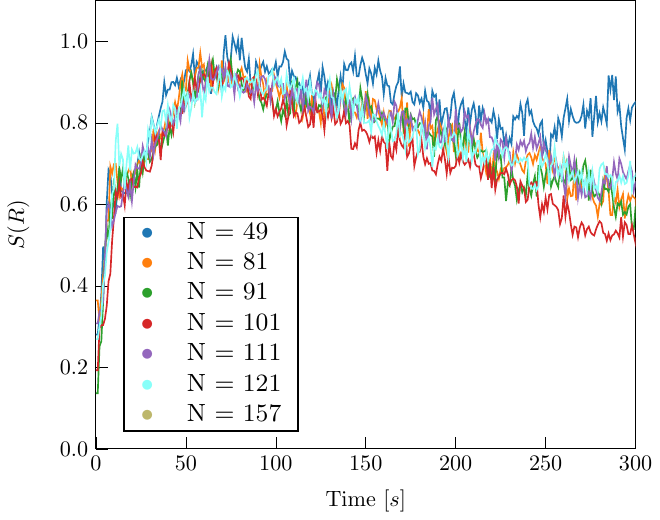}
         \caption{Swarm without obstacle.}
         \label{fig:entropyfalse}
     \end{subfigure}
        \caption{Social entropy with and without obstacles.}
\end{figure}
The entropy results are presented in Figure \ref{fig:entropyfalse} and \ref{fig:entropytrue}. In this case, we only compute the entropy of the forager agents, and we normalise against a practical "maximum" entropy computed by randomizing agents over the entire space. At $t=0$ all agents start at the nest, hence the low entropy values, and from there the entropy increases while the swarm explores and tries to find the target. After the exploration phase, the entropy begins to settle to lower values as the robots accumulate over fewer trajectories. Entropy is higher for the symmetrical obstacle due to first, the split of agents among the two possible paths, and second, the minimum length path being longer.

At last, regarding the robustness in the system, if a beacon fails or is removed, it will be eventually replaced by a new exploring agent. The simulations are performed for noisy signals (weights and velocities), and the beacon processes a maximum number of $5$ signals to account for bandwidth restrictions or colliding communication packages.
\subsection{Robotic Swarm}
We implemented the work on real Elisa3-robots, in order to qualitatively confirm results of the work. Since the Elisa-3 robot is also used for the Webots experiments, the robot characteristics and behaviour are the same for both the real swarm and the simulated swarm. Note that the Elisa3-robots matches the restrictions of Assumption \ref{as:1}, except the capacity to measure its angular orientation. To overcome this shortcoming, a global tracking system is employed which equips the robots with a virtual compass. Since our tracking system is only able to measure the orientations of the robots when all robots are not moving, the robots run synchronously.

In the test set-up, a swarm of 35 robots is deployed around a starting region and needs to look for a target in an arena of size $2.4 m \times 1.15 m$ without obstacles (See Electronic Suplementary Material 1). Figures \ref{fig:real robots1}, \ref{fig:real robots2}, \ref{fig:real robots3} and \ref{fig:real robots4} show snapshots of this simulation. The red and blue pillars are the centers of the nest region and target region, respectively. Green coloured robots are beacons, red coloured are searching for the nest region, blue coloured are searching for the target region. The black arrows indicate the heading direction, the blue and red arrows the guiding velocities $\mathbf{\nu}(k)$ at each beacon. 

The resulting behaviour aligns with the predicted design dynamics.
The swarm creates a covering beacon network. The first robots reaching the target region attracts other robots to this region and after 30 steps all non-beacon robots travel back and forth between the target regions, clustered around the shortest path. We point out that during all tests there were robot failures. This did not affected the swarm's behaviour noticeably, showing its robustness. We leave for future work the realisation of extensive tests with more powerful robots to confirm all results in other scenarios of swarm deployment. 
\begin{figure}
     \centering
     \begin{subfigure}[b]{\linewidth}
         \centering
         \includegraphics[width=0.8\linewidth]{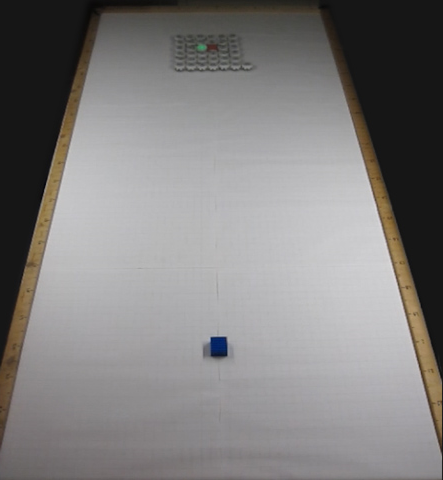}
         \caption{Initial Configuration}
         \label{fig:real robots1}
     \end{subfigure}
     \hfill
     \begin{subfigure}[b]{\linewidth}
         \centering
         \includegraphics[width=0.8\linewidth]{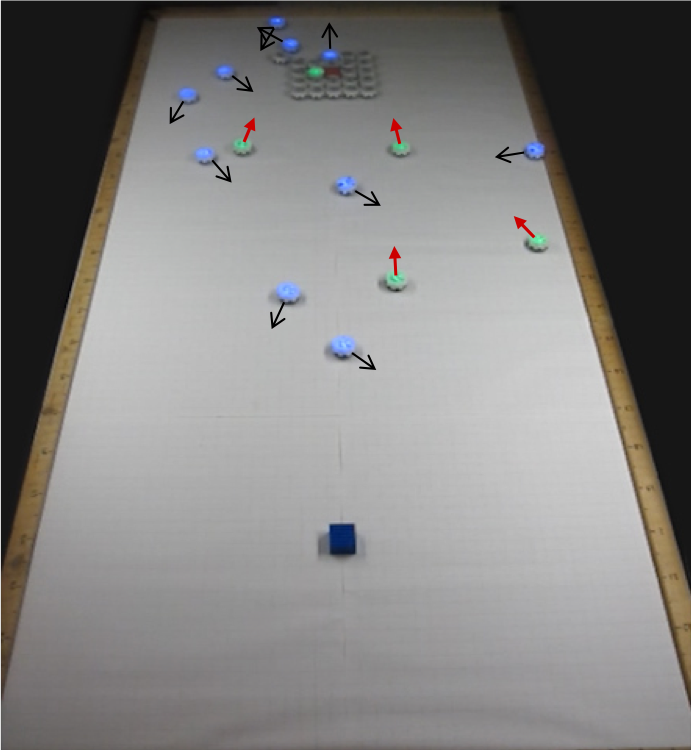}
         \caption{Exploration}
         \label{fig:real robots2}
     \end{subfigure}
     \caption{Swarm in exploration mode.}
\end{figure}

\begin{figure}
    \centering
     \begin{subfigure}[b]{\linewidth}
         \centering
         \includegraphics[width=0.8\linewidth]{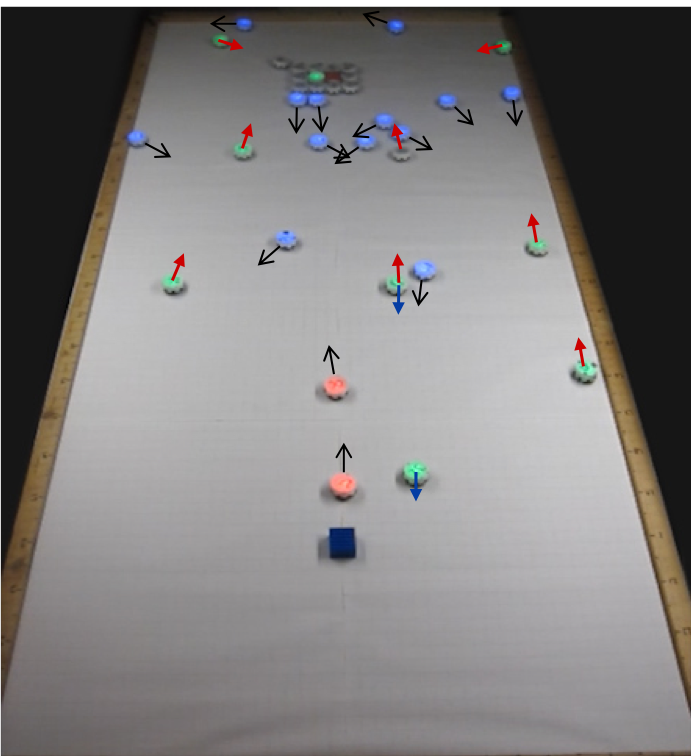}
         \caption{Target Found}
         \label{fig:real robots3}
     \end{subfigure}
     \hfill
     \begin{subfigure}[b]{\linewidth}
         \centering
         \includegraphics[width=0.8\linewidth]{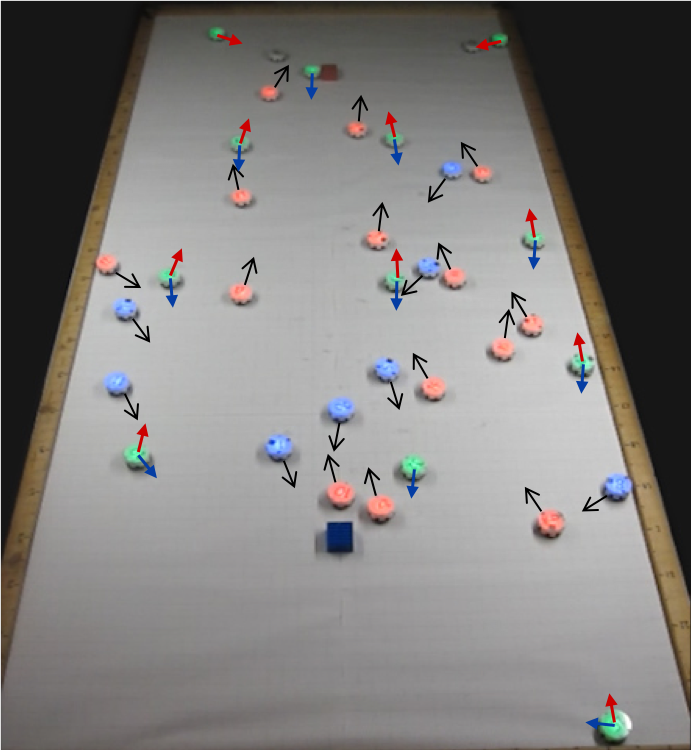}
         \caption{Exploitation}
         \label{fig:real robots4}
     \end{subfigure}
     \caption{Swarm exploiting the food source.}
\end{figure}
\section{Extensions and future directions}
The presented system throughout this work has shown to be effective at generating emerging behaviour that solves an exploration-exploitation problem, while requiring very minimal characteristics on the individual robots. The ultimate goal is to design swarms methodologies that allow us to implement such methods in real robots as efficiently as possible, without requiring complex single systems. In this line of thought, we can already propose some extensions to such minimalistic swarm methods. One clear downside of the foraging system presented in this work is the requirement for relatively large amounts of beacons. This increases the overall inefficiency and requirements of the system, requiring robots to behave as beacons even in regions where they are not needed.

An extension to our method to increase overall efficiency could be to implement controllers in the beacons to allow them to \emph{turn back to foragers} when their weights have not been updated for a long time and to \emph{move to more transited areas} of the environment. This would on the one hand ensure only the necessary amount of beacons are employed as the system evolves, and on the other hand increase the granularity (or the definition) of the paths that are being used more often, possibly enabling more optimal configurations.

We implemented a prototype of such method, simply allowing beacons to turn to foragers when their weights are lower than a specific threshold for too long, and adding a P controller to the velocity of the beacons (set to $0$ in previous experiments). This controller allows the beacons to slowly move towards the mid direction between their two guiding velocity vectors $\mathbf{u}_b^s(k)$. The logic between this controllers is straight-forward: If the most optimal path is a straight line, the most optimal configuration for the beacons would be to sit along the line, with the guiding vectors on $180^o$.
\begin{figure}
    \centering
     \begin{subfigure}[b]{\linewidth}
         \centering
         \includegraphics[width=0.8\linewidth]{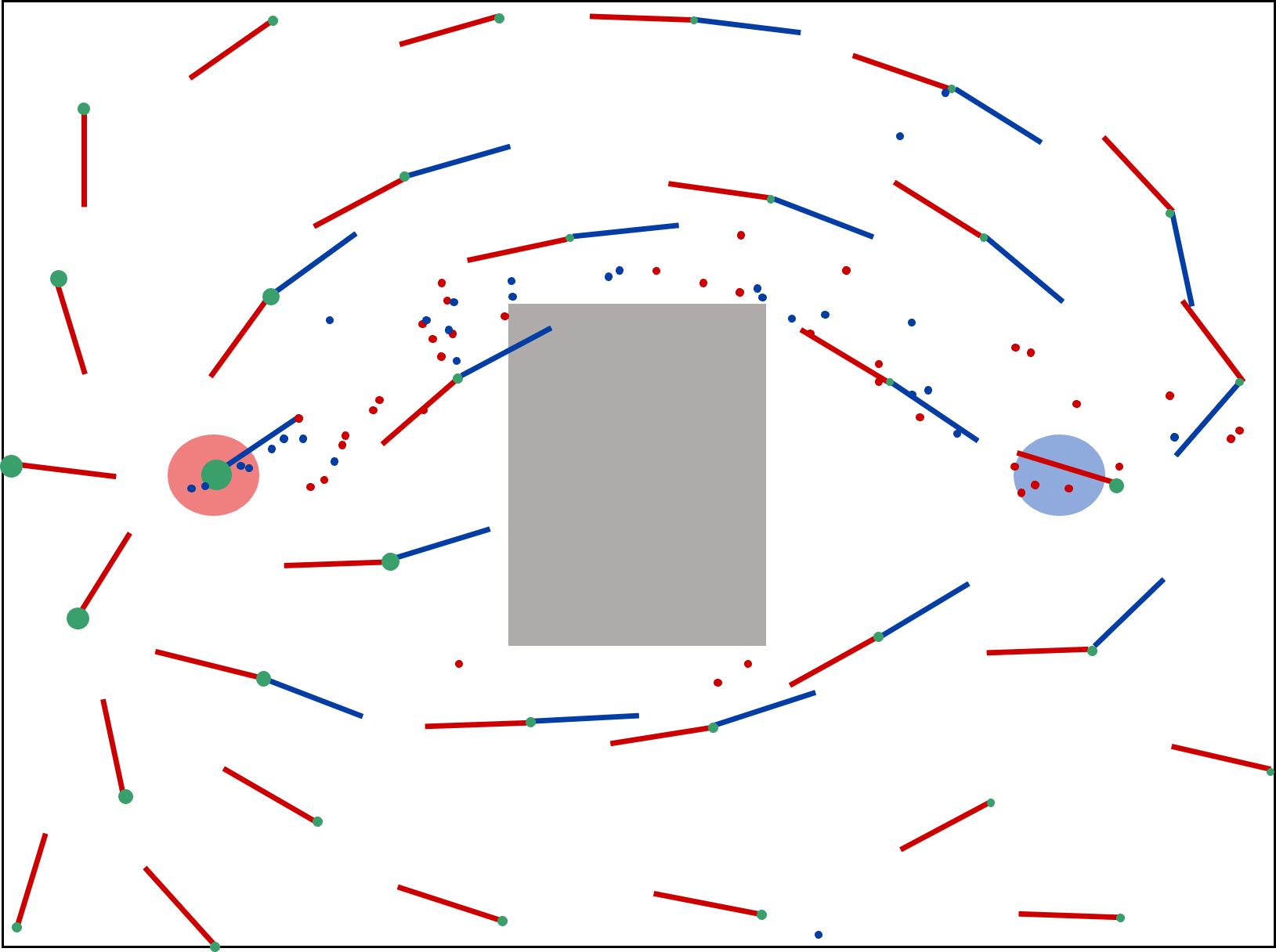}
         \caption{Adapting swarm after $100s$.}
         \label{fig:moving1}
     \end{subfigure}
     \hfill
     \begin{subfigure}[b]{\linewidth}
         \centering
         \includegraphics[width=0.8\linewidth]{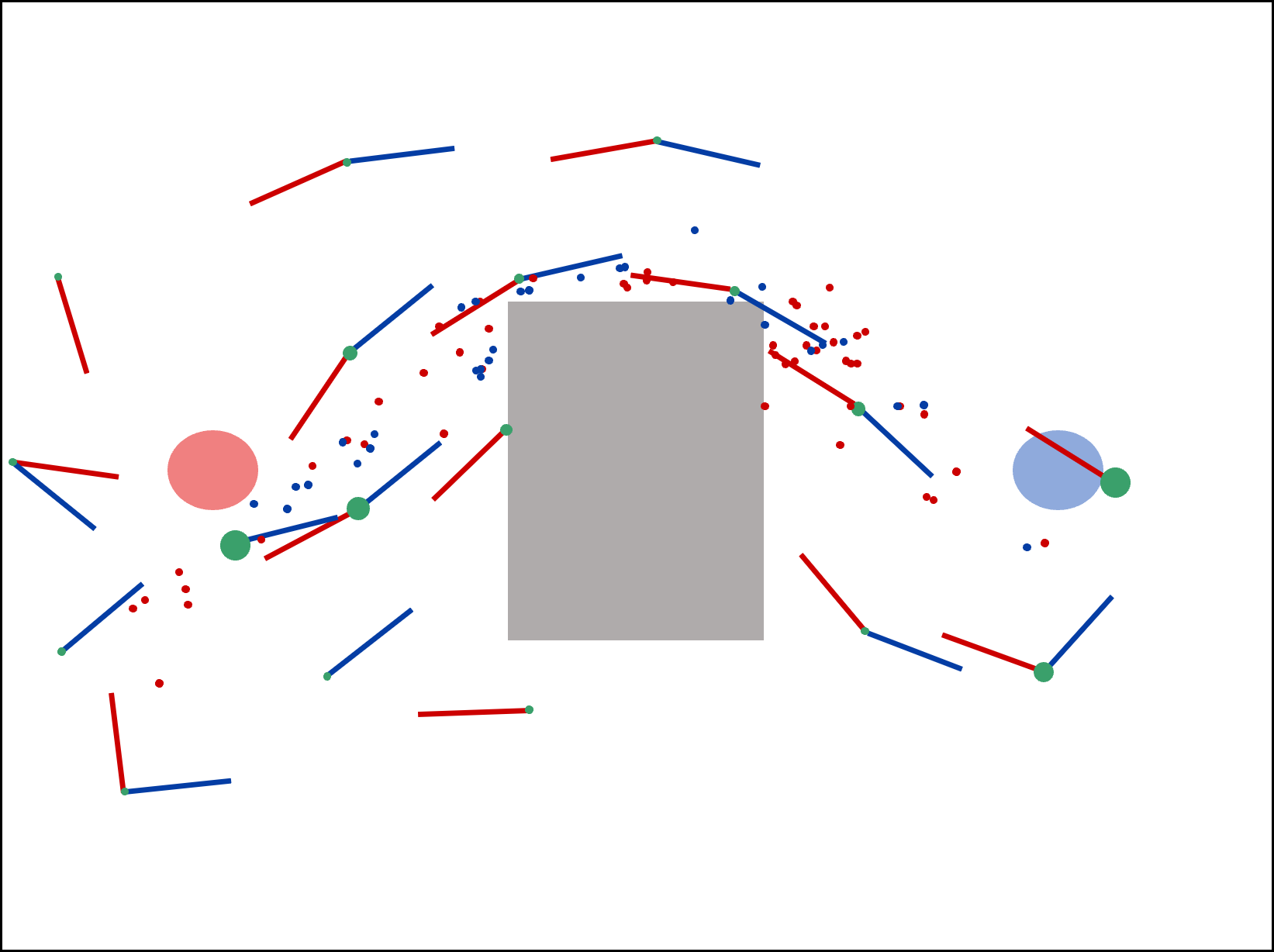}
         \caption{Adapting swarm after $200s$.}
         \label{fig:moving2}
     \end{subfigure}
     \caption{Moving beacons extension.}
\end{figure}
The extension was implemented on a simplified ``particle" simulator with $N=101$ agents, where the robots are point-masses and are not affected by collision avoidance mechanisms, since this allows us to test extensions quicker before moving to real implementations. From Figures \ref{fig:moving1} and \ref{fig:moving2} one can see how this proposal seems promising to increase the efficiency of the system. Beacons re-configure themselves, and get closer to the optimal path allowing more accurate trajectories by the agents. Additionally, the environment is covered by beacons during the exploration phase, but eventually large unvisited areas of the domain are covered less beacons (until some are completely empty), reflecting the fact that those areas are not of interest any more. Such extensions would also have an impact on the performance of the system on dynamic environments, by allowing the swarm to re-configure depending on changes around them, and using more infrastructure (beacons) when the surroundings are changing faster.
\section{Discussion}
We have presented a foraging swarm design where robots are able to guide each-other to successfully forage without the need of position measurements, infrastructure or global knowledge, and using minimal amounts of knowledge about each-other or the environment. The system has been implemented on a swarm of Elisa-3 robots, and an extensive experimental study has been performed using the \emph{Webots} simulator. We have shown how a middle sized swarm ($N\approx 100$) is able to find an unknown target and exploit trajectories close to the minimum length path. The system does require agents to know their orientation, and we have seen how it can be affected by overcrowding effects when agents need to avoid colliding with each-other. Additionally, we have observed how the optimality of the trajectories is affected by the resulting distribution of the beacons, which gives room for future work regarding the possibility of having beacons re-configure to more optimal positions. This would allow for beacon movement and re-configuration to explore the possibilities of using dynamic infrastructure in robotic swarms. We expect in the future to add ultra-wide band communication modules to the Elisa-3 robots with magnetometers, that would allow us to run the system on a much larger swarm and on more complex environments, and to apply it to other navigation-based problems like surveillance, target interception or flocking. We leave for future work as well the formal analysis of the resulting beacon graphs, and the evolution of the variables $\omega(k)$ and $\mathbf{u}(k)$.

\bibliographystyle{IEEEtran}
\bibliography{IEEEabrv,biblo}

% Generated by IEEEtran.bst, version: 1.14 (2015/08/26)
\begin{thebibliography}{10}
\providecommand{\url}[1]{#1}
\csname url@samestyle\endcsname
\providecommand{\newblock}{\relax}
\providecommand{\bibinfo}[2]{#2}
\providecommand{\BIBentrySTDinterwordspacing}{\spaceskip=0pt\relax}
\providecommand{\BIBentryALTinterwordstretchfactor}{4}
\providecommand{\BIBentryALTinterwordspacing}{\spaceskip=\fontdimen2\font plus
\BIBentryALTinterwordstretchfactor\fontdimen3\font minus
  \fontdimen4\font\relax}
\providecommand{\BIBforeignlanguage}[2]{{%
\expandafter\ifx\csname l@#1\endcsname\relax
\typeout{** WARNING: IEEEtran.bst: No hyphenation pattern has been}%
\typeout{** loaded for the language `#1'. Using the pattern for}%
\typeout{** the default language instead.}%
\else
\language=\csname l@#1\endcsname
\fi
#2}}
\providecommand{\BIBdecl}{\relax}
\BIBdecl

\bibitem{beni1993swarm}
G.~Beni and J.~Wang, ``Swarm intelligence in cellular robotic systems,'' in
  \emph{Robots and biological systems: towards a new bionics?}\hskip 1em plus
  0.5em minus 0.4em\relax Springer, 1993, pp. 703--712.

\bibitem{dorigo2007swarm}
M.~Dorigo, M.~Birattari \emph{et~al.}, ``Swarm intelligence.''
  \emph{Scholarpedia}, vol.~2, no.~9, p. 1462, 2007.

\bibitem{blum2008swarm}
C.~Blum and D.~Merkle, ``Swarm intelligence,'' \emph{Swarm Intelligence in
  Optimization; Blum, C., Merkle, D., Eds}, pp. 43--85, 2008.

\bibitem{kennedy2006swarm}
J.~Kennedy, ``Swarm intelligence,'' in \emph{Handbook of nature-inspired and
  innovative computing}.\hskip 1em plus 0.5em minus 0.4em\relax Springer, 2006,
  pp. 187--219.

\bibitem{thrun1992efficient}
S.~B. Thrun, ``Efficient exploration in reinforcement learning,'' 1992.

\bibitem{ishii2002control}
S.~Ishii, W.~Yoshida, and J.~Yoshimoto, ``Control of exploitation--exploration
  meta-parameter in reinforcement learning,'' \emph{Neural networks}, vol.~15,
  no. 4-6, pp. 665--687, 2002.

\bibitem{nair2018overcoming}
A.~Nair, B.~McGrew, M.~Andrychowicz, W.~Zaremba, and P.~Abbeel, ``Overcoming
  exploration in reinforcement learning with demonstrations,'' in \emph{2018
  IEEE International Conference on Robotics and Automation (ICRA)}.\hskip 1em
  plus 0.5em minus 0.4em\relax IEEE, 2018, pp. 6292--6299.

\bibitem{fujisawa2008communication}
R.~Fujisawa, H.~Imamura, T.~Hashimoto, and F.~Matsuno, ``Communication using
  pheromone field for multiple robots,'' in \emph{2008 IEEE/RSJ International
  Conference on Intelligent Robots and Systems}.\hskip 1em plus 0.5em minus
  0.4em\relax IEEE, 2008, pp. 1391--1396.

\bibitem{russell1997heat}
R.~A. Russell, ``Heat trails as short-lived navigational markers for mobile
  robots,'' in \emph{Proceedings of International Conference on Robotics and
  Automation}, vol.~4.\hskip 1em plus 0.5em minus 0.4em\relax IEEE, 1997, pp.
  3534--3539.

\bibitem{johansson2009navigating}
R.~Johansson and A.~Saffiotti, ``Navigating by stigmergy: A realization on an
  rfid floor for minimalistic robots,'' in \emph{2009 IEEE International
  Conference on Robotics and Automation}.\hskip 1em plus 0.5em minus
  0.4em\relax IEEE, 2009, pp. 245--252.

\bibitem{drogoul1993some}
A.~Drogoul and J.~Ferber, ``Some experiments with foraging robots,'' in
  \emph{From Animals to Animats 2: Proceedings of the Second International
  Conference on Simulation of Adaptive Behavior}, vol.~2.\hskip 1em plus 0.5em
  minus 0.4em\relax MIT Press, 1993, p. 451.

\bibitem{sugawara2004foraging}
K.~Sugawara, T.~Kazama, and T.~Watanabe, ``Foraging behavior of interacting
  robots with virtual pheromone,'' in \emph{2004 IEEE/RSJ International
  Conference on Intelligent Robots and Systems (IROS)(IEEE Cat. No.
  04CH37566)}, vol.~3.\hskip 1em plus 0.5em minus 0.4em\relax IEEE, 2004, pp.
  3074--3079.

\bibitem{fujiswarms}
\BIBentryALTinterwordspacing
R.~Fujisawa, S.~Dobata, K.~Sugawara, and F.~Matsuno, ``Designing pheromone
  communication in swarm robotics: Group foraging behavior mediated by chemical
  substance,'' \emph{Swarm Intelligence}, vol.~8, no.~3, pp. 227--246, 2014.
  [Online]. Available: \url{https://doi.org/10.1007/s11721-014-0097-z}
\BIBentrySTDinterwordspacing

\bibitem{campo2010artificial}
A.~Campo, {\'A}.~Guti{\'e}rrez, S.~Nouyan, C.~Pinciroli, V.~Longchamp,
  S.~Garnier, and M.~Dorigo, ``Artificial pheromone for path selection by a
  foraging swarm of robots,'' \emph{Biological cybernetics}, vol. 103, no.~5,
  pp. 339--352, 2010.

\bibitem{alers2014insect}
S.~Alers, K.~Tuyls, B.~Ranjbar-Sahraei, D.~Claes, and G.~Weiss,
  ``Insect-inspired robot coordination: foraging and coverage,''
  \emph{Artificial life}, vol.~14, pp. 761--768, 2014.

\bibitem{10.1007/978-3-030-00533-7_11}
A.~Font~Llenas, M.~S. Talamali, X.~Xu, J.~A.~R. Marshall, and A.~Reina,
  ``Quality-sensitive foraging by a robot swarm through virtual pheromone
  trails,'' in \emph{Swarm Intelligence}, M.~Dorigo, M.~Birattari, C.~Blum,
  A.~L. Christensen, A.~Reina, and V.~Trianni, Eds.\hskip 1em plus 0.5em minus
  0.4em\relax Cham: Springer International Publishing, 2018, pp. 135--149.

\bibitem{10.1007/978-3-642-15461-4_8}
R.~Mayet, J.~Roberz, T.~Schmickl, and K.~Crailsheim, ``Antbots: A feasible
  visual emulation of pheromone trails for swarm robots,'' in \emph{Swarm
  Intelligence}, M.~Dorigo, M.~Birattari, G.~A. Di~Caro, R.~Doursat, A.~P.
  Engelbrecht, D.~Floreano, L.~M. Gambardella, R.~Gro{\ss}, E.~{\c{S}}ahin,
  H.~Sayama, and T.~St{\"u}tzle, Eds.\hskip 1em plus 0.5em minus 0.4em\relax
  Berlin, Heidelberg: Springer Berlin Heidelberg, 2010, pp. 84--94.

\bibitem{4223153}
S.~Garnier, F.~Tache, M.~Combe, A.~Grimal, and G.~Theraulaz, ``Alice in
  pheromone land: An experimental setup for the study of ant-like robots,'' in
  \emph{2007 IEEE Swarm Intelligence Symposium}, 2007, pp. 37--44.

\bibitem{4209807}
V.~A. Ziparo, A.~Kleiner, B.~Nebel, and D.~Nardi, ``Rfid-based exploration for
  large robot teams,'' in \emph{Proceedings 2007 IEEE International Conference
  on Robotics and Automation}, 2007, pp. 4606--4613.

\bibitem{DUCATELLE200925}
\BIBentryALTinterwordspacing
F.~Ducatelle, A.~Förster, G.~A. {Di Caro}, and L.~M. Gambardella, ``Supporting
  navigation in multi-robot systems through delay tolerant network
  communication,'' \emph{IFAC Proceedings Volumes}, vol.~42, no.~22, pp.
  25--30, 2009, 1st IFAC Workshop on Networked Robotics. [Online]. Available:
  \url{https://www.sciencedirect.com/science/article/pii/S1474667015340131}
\BIBentrySTDinterwordspacing

\bibitem{svennebring2004building}
J.~Svennebring and S.~Koenig, ``Building terrain-covering ant robots: A
  feasibility study,'' \emph{Autonomous Robots}, vol.~16, no.~3, pp. 313--332,
  2004.

\bibitem{payton2001pheromone}
D.~Payton, M.~Daily, R.~Estowski, M.~Howard, and C.~Lee, ``Pheromone
  robotics,'' \emph{Autonomous Robots}, vol.~11, no.~3, pp. 319--324, 2001.

\bibitem{ducatelle2011communication}
F.~Ducatelle, G.~A. Di~Caro, C.~Pinciroli, F.~Mondada, and L.~Gambardella,
  ``Communication assisted navigation in robotic swarms: self-organization and
  cooperation,'' in \emph{2011 IEEE/RSJ International Conference on Intelligent
  Robots and Systems}.\hskip 1em plus 0.5em minus 0.4em\relax IEEE, 2011, pp.
  4981--4988.

\bibitem{hrolenok2010collaborative}
B.~Hrolenok, S.~Luke, K.~Sullivan, and C.~Vo, ``Collaborative foraging using
  beacons.'' in \emph{AAMAS}, vol.~10, 2010, pp. 1197--1204.

\bibitem{russell2015swarm}
K.~Russell, M.~Schader, K.~Andrea, and S.~Luke, ``Swarm robot foraging with
  wireless sensor motes,'' in \emph{Proceedings of the 2015 International
  Conference on Autonomous Agents and Multiagent Systems}.\hskip 1em plus 0.5em
  minus 0.4em\relax Citeseer, 2015, pp. 287--295.

\bibitem{nouyan2008path}
S.~Nouyan, A.~Campo, and M.~Dorigo, ``Path formation in a robot swarm,''
  \emph{Swarm Intelligence}, vol.~2, no.~1, pp. 1--23, 2008.

\bibitem{reina2017ark}
A.~Reina, A.~J. Cope, E.~Nikolaidis, J.~A. Marshall, and C.~Sabo, ``Ark:
  Augmented reality for kilobots,'' \emph{IEEE Robotics and Automation
  letters}, vol.~2, no.~3, pp. 1755--1761, 2017.

\bibitem{talamali2020sophisticated}
M.~S. Talamali, T.~Bose, M.~Haire, X.~Xu, J.~A. Marshall, and A.~Reina,
  ``Sophisticated collective foraging with minimalist agents: a swarm robotics
  test,'' \emph{Swarm Intelligence}, vol.~14, no.~1, pp. 25--56, 2020.

\bibitem{lemmens2007bee}
N.~Lemmens, S.~de~Jong, K.~Tuyls, and A.~Now{\'e}, ``Bee system with inhibition
  pheromones,'' in \emph{European conference on complex systems}.\hskip 1em
  plus 0.5em minus 0.4em\relax Citeseer, 2007.

\bibitem{lemmens2009stigmergic}
N.~Lemmens and K.~Tuyls, ``Stigmergic landmark foraging,'' in \emph{Proceedings
  of The 8th International Conference on Autonomous Agents and Multiagent
  Systems-Volume 1}, 2009, pp. 497--504.

\bibitem{kuffner2000rrt}
J.~J. Kuffner and S.~M. LaValle, ``Rrt-connect: An efficient approach to
  single-query path planning,'' in \emph{Proceedings 2000 ICRA. Millennium
  Conference. IEEE International Conference on Robotics and Automation.
  Symposia Proceedings (Cat. No. 00CH37065)}, vol.~2.\hskip 1em plus 0.5em
  minus 0.4em\relax IEEE, 2000, pp. 995--1001.

\bibitem{djarhscc}
\BIBentryALTinterwordspacing
D.~J. Ornia and M.~Mazo, ``Convergence of ant colony multi-agent swarms,'' in
  \emph{Proceedings of the 23rd International Conference on Hybrid Systems:
  Computation and Control}, ser. HSCC ’20.\hskip 1em plus 0.5em minus
  0.4em\relax New York, NY, USA: Association for Computing Machinery, 2020.
  [Online]. Available: \url{https://doi.org/10.1145/3365365.3382199}
\BIBentrySTDinterwordspacing

\bibitem{michel2004cyberbotics}
O.~Michel, ``Cyberbotics ltd. webots™: professional mobile robot
  simulation,'' \emph{International Journal of Advanced Robotic Systems},
  vol.~1, no.~1, p.~5, 2004.

\bibitem{balch2000hierarchic}
T.~Balch, ``Hierarchic social entropy: An information theoretic measure of
  robot group diversity,'' \emph{Autonomous robots}, vol.~8, no.~3, pp.
  209--238, 2000.

\end{thebibliography}

\end{document}